\newif\iflongversion
\newif\ificml
\newif\ifoldloss
\newif\ifplanB
\DeclareMathOperator{\Categorical}{Categorical}
\DeclareMathOperator{\Dirichlet}{Dirichlet}
\DeclareMathOperator{\softmax}{softmax}
\newcommand{\regret}{\mathcal{R}}
\newcommand{\sigm}{\sigma}
\newcommand{\sigm}{{\rm sigmoid}}
\newcommand{\1}{{\mathbbm{1}}}
\newtheorem{prop}{Proposition}
\icmltitlerunning{Meta-Transfer Objective for Learning to Disentangle Causal Mechanisms}
\begin{document}

\ificml
\twocolumn[
\icmltitle{A Meta-Transfer Objective for Learning to Disentangle Causal Mechanisms}
\else
\title{A Meta-Transfer Objective for Learning to Disentangle Causal Mechanisms}
\fi

\ificml

\begin{icmlauthorlist}
\icmlauthor{Yoshua Bengio}{mila,cifar,cifar_chair}
\icmlauthor{Tristan Deleu}{mila}
\icmlauthor{Nasim Rahaman}{mila, uhd}
\icmlauthor{Rosemary Ke}{mila}
\icmlauthor{Sébastien Lachappelle}{mila}
\icmlauthor{Olexa Bilaniuk}{mila}
\icmlauthor{Anirudh Goyal}{mila}
\icmlauthor{Christopher Pal}{mila,cifar_chair}
\end{icmlauthorlist}
\icmlaffiliation{mila}{Mila}
\icmlaffiliation{cifar}{CIFAR Senior Fellow}
\icmlaffiliation{cifar_chair}{Canada CIFAR AI Chair}
\icmlaffiliation{uhd}{Ruprecht-Karls-Universit\"at Heidelberg}
\icmlcorrespondingauthor{Yoshua Bengio}{yoshua.bengio@mila.quebec}
\icmlkeywords{Causality, Transfer Learning, Meta-Learning, Representation Learning}

\vskip 0.3in
] 
\printAffiliationsAndNotice{}

\else
\author{
Yoshua Bengio$^{1,2,5}$, Tristan Deleu$^1$, Nasim Rahaman$^4$, Nan Rosemary Ke$^3$, Sébastien Lachapelle$^1$,\\
Olexa Bilaniuk$^1$, Anirudh Goyal $^1$  and  Christopher Pal$^{3,5}$\\
       Mila,
       Montréal, Québec, Canada\\
       $^1$ Université de Montréal\\
       $^2$ CIFAR Senior Fellow\\
       $^3$ École Polytechnique Montréal\\
       $^4$ Ruprecht-Karls-Universit\"at Heidelberg \\
       $^5$ Canada CIFAR AI Chair
      }

\date{}

\editor{}
\maketitle
\fi

\begin{abstract}
We propose to meta-learn causal structures based on how fast a learner adapts to new distributions arising from sparse distributional changes, e.g. due to interventions, actions of agents and other sources of non-stationarities. We show that under this assumption, the correct causal structural choices lead to faster adaptation to modified distributions because the changes are concentrated in one or just a few mechanisms when the learned knowledge is modularized appropriately. This leads to sparse expected gradients and a lower effective number of degrees of freedom needing to be relearned while adapting to the change. It motivates using the speed of adaptation to a modified distribution as a meta-learning objective. We demonstrate how this can be used to determine the cause-effect relationship between two observed variables.  The distributional changes do not need to correspond to standard interventions (clamping a variable), and the learner has no direct knowledge of these interventions. We show that causal structures can be parameterized via continuous variables and learned end-to-end. We then explore how these ideas could be used to also learn an encoder that would map low-level observed variables to unobserved causal variables leading to faster adaptation out-of-distribution, learning a representation space where one can satisfy the assumptions of independent mechanisms and of small and sparse changes in these mechanisms due to actions and non-stationarities.
\end{abstract}

\ificml
\else

\vspace*{6mm}
\fi

\section{Introduction}
Current machine learning methods seem weak when they are required to generalize beyond the training distribution, which is what is often needed in practice. It is not enough to obtain good generalization on a test set sampled from the same distribution as the training data, we would also like what has been learned in one setting to generalize well in other related distributions. These distributions may involve the same concepts that were seen previously by the learner, with the changes typically arising because of actions of agents. More generally, we would like what has been learned previously to form a rich base from which very fast adaptation to a new but related distribution can take place, i.e., obtain good transfer.  
\iflongversion
Some new concept may have to be learned but because most of the other relevant concepts have already been captured by the learner (as well as how they can be composed), learning can be very fast on the transfer distribution. 
\fi

Short of any assumption, it is impossible to have a successful transfer to an unrelated distribution. In this paper we focus on the assumption that the changes are sparse when the knowledge is represented in an appropriately modularized way, with only one or a few of the modules having changed. This is especially relevant when the distributional change is due to actions by one or more agents, such as the interventions discussed in the causality literature~\citep{pearl2009causality,peters2017elements}, where a single causal variable is clamped to a particular value. In general, it is difficult for agents to influence many underlying causal variables at a time, and although this paper is not about agent learning as such, this is a property of the world that we propose to exploit here, to help discovering these variables and how they are causally related to each other. 

To motivate the need for inferring causal structure, consider that interventions may be actually performed or may be imagined. In order to properly plan in a way that takes into account interventions, one needs to imagine a possible change to the joint distribution of the variables of interest due to an intervention, even one that has never been observed before. This goes beyond good transfer learning and requires causal learning and causal reasoning. For this purpose, it is not sufficient to learn the joint distribution of the observed variables. One also should learn enough about the underlying high-level variables and their causal relations to be able to properly infer the effect of an intervention. For example, $A$={\bf Raining} causes $B$={\bf Open Umbrella} (and not vice-versa). Changing the marginal probability of {\bf Raining} (say because the weather changed) does not change the mechanism that relates $A$ and $B$ (captured by $P(B|A)$), but will have an impact on the marginal $P(B)$. Conversely, an agent's intervention on $B$ ({\bf Open umbrella}) will have no effect on the marginal distribution of $A$ ({\bf Raining}). That asymmetry is generally not visible from the $(A,B)$ training pairs alone, until a change of distribution occurs, e.g. due to an intervention.  This motivates the setup of this paper, where one learns from a set of distributions arising from not necessarily known interventions, not simply to capture a joint distribution but to discover the some underlying causal structure.

\iflongversion 
Machine learning methods are often exploiting some form of assumption about the data distribution (or else, the no free lunch theorem tells us that we cannot have any confidence in generalization). 
\fi
In this paper, we are considering not just assumptions on the data distribution but also on how it changes (e.g., when going from a training distribution to a transfer distribution, possibly resulting from some agent's actions). We propose to rely on the assumption that, {\em when the knowledge about the distribution is appropriately represented, these changes would be small}. This arises because of an {\bf underlying assumption} (but more difficult to verify directly) {\bf that only one or few of the ground truth mechanisms have been changed}, due to some generalized form of intervention leading to the modified distribution.

How can we exploit this assumption? As we explain theoretically and verify experimentally here, if we have the right knowledge representation, then we should get fast adaptation to the transfer distribution when starting from a model that is well trained on the training distribution. This arises because of our assumption that the ground truth data generative process is obtained as the composition of independent mechanisms and that, very few ground truth mechanisms and parameters need to change when going from the training distribution to the transfer distribution. A model capturing a corresponding factorization of knowledge would thus require just a few updates, a few examples, for this adaptation to the transfer distribution. As shown below, the expected gradient on the unchanged parameters would be near 0 (if the model was already well trained on the training distribution), so the effective search space during adaptation to the transfer distribution would be greatly reduced, which tends to produce fast adaptation, as found experimentally.

Thus, based on the assumption of small change in the right knowledge representation space, we can define a meta-learning objective that measures the speed of adaptation, i.e., a form of regret, in order to optimize the way in which knowledge should be represented, factorized and structured. {\em This is the core idea presented in this paper}. Note that a stronger signal can be obtained when there are more non-stationarities, i.e., many changes in distribution, just like in meta-learning we get better results with more meta-examples.

In this way, we can take what is normally considered a nuisance in machine learning (changes in distribution due to non-stationarity, uncontrolled interventions, etc.) and turn that into a training signal to find a good way to factorize knowledge into components and mechanisms that match the assumption of small change. Thus, we end up optimizing in an end-to-end way the very thing we care about at the end, i.e. fast transfer and robustness to distributional changes. If the data was really generated from the composition of independent causal mechanisms~\citep{peters2017elements}, then there exists a good factorization of knowledge that mimics that structure. If in addition, at each time step, agents in the real world tend to only be able to change one or very few high-level variables (or the associated mechanisms producing them), then our assumption of small change (in the right representation) should be generally valid. Also, in addition to obtaining fast transfer, we may be able to recover a good approximation of the true causal decomposition into independent mechanisms (to the extent that the observations and interventions can reveal those mechanisms).  

In this paper, we begin exploring the above ideas with specific experiments on synthetically generated data in order to validate them and demonstrate the existence of simple algorithms to exploit them. However it is clear to us that much more work will be needed to evaluate the proposed approach in a diversity of settings and with different specific parametrizations, training objectives, environments, etc. We begin with what are maybe the simplest possible settings and evaluate whether the above approach can be used to learn the direction of  
\ifplanB
causality.
\else
causality between two or a few random variables.
\fi
We then study the crucial question of obtaining a training signal about how to transform raw observed data into a representation space where the latent variables can be modeled by a sparse causal graph with sparse distributional changes and show results that confirm that the correct encoder leads to a better value of our expected regret meta-learning objective.

\section{Which is Cause and Which is Effect?}
\label{sec:two-observed-variables}

To anchor ideas and show an example of application of the above-proposed meta-objective for knowledge decomposition, we consider in this section the problem of determining if variable $A$ causes variable $B$ or vice-versa. The learner observes training samples $(a,b)$ from a pair of related distributions, which by convention we call the training distribution and the transfer distribution. Note that based only on samples from a single (training) distribution, in general both the $A \rightarrow B$ model ($A$ causes $B$) and the $B \rightarrow A$ model (vice-versa, see Equation~\eqref{eq:bivariate-models} below) tend to perform as well in terms of ordinary generalization (to a test set sampled from the training distribution), see also a theoretical argument and simulation results in Appendix~\ref{sec:non-identifiability}. To highlight the power of the proposed meta-learning objective, we consider the situation where lots of examples are available for the training distribution but very few for the transfer distribution. In fact, as we will argue below, the training signal that will allow us to infer the correct causal direction will be stronger if we have access to many short transfer adaptation episodes. Short episodes are most informative because after having seen a lot of data from the transfer distribution, it will not matter much whether $A$ causes $B$ or vice-versa (when there is enough training data compared to the number of free parameters, both models converge towards an optimal estimation of the joint). However, in order to generalize quickly from very few examples of the transfer distribution, it does matter to have made the correct choice of the causal direction. Let us now justify this in more detail below and then demonstrate this by simulations.

\subsection{Learning a Causal Graph with two Discrete Variables}

Let both $A$ and $B$ be discrete variables each taking $N$ possible values and consider the following two parametrizations (the $A\rightarrow B$ model and the $B \rightarrow A$ model) to estimate their joint distribution:
\begin{align}
\label{eq:bivariate-models}
    P_{A\rightarrow B}(A,B)&=P_{A\rightarrow B}(A) P_{A\rightarrow B}(B\mid A) \nonumber \\
    P_{B\rightarrow A}(A,B)&=P_{B\rightarrow A}(B) P_{B\rightarrow A}(A \mid B) 
\end{align}
Each of these two graphical models (denoted $A \rightarrow B$ and $B \rightarrow A$) decomposes the joint into two separately parametrized modules, each corresponding to a different causal mechanism associated with the probability of a variable given its parents in the graph. This amounts to four modules: $P_{A\rightarrow B}(A)$, $P_{A\rightarrow B}(B\mid A)$, $P_{B\rightarrow A}(B)$ and $P_{B\rightarrow A}(A\mid B)$. We will train both models independently. Since we assume in this section that the pairs $(A,B)$ are completely observed, we can use a simple maximum likelihood estimator to independently train all four 
\iflongversion modules (the log-likelihood of the joint decomposes into separate objective functions, one for each conditional, in a directed graphical model with fully observed variables).
\else
modules.
\fi
In the discrete case with tabular parametrization, the maximum likelihood estimator can be computed analytically, and corresponds to the appropriately normalized relative frequencies. Let $\theta$ denote the parameters of all these models, split into sub-vectors for each module, e.g., $\theta_{A|B}$ for the $N^2$ conditional probabilities for each possible value of $B$ and each possible value of $A$. In our experiments, we parametrized these probabilities via softmax of unnormalized quantities.

\subsubsection{The Advantage of the Correct Causal Model}

First, let us consider simply the likelihood of the training data only (i.e., no change of distribution) for the different causal models considered. Both models have $O(N^2)$ parameters, and maximum likelihood estimation leads to indistinguishable test set performance (where the test set is sampled from the training distribution). See Appendix~\ref{sec:non-identifiability} for a demonstration that both models would have the same likelihood, and associated experimental results. These results are not surprising in light of the existing literature on non-identifiability of causality from observations~\citep{pearl2009causality,peters2017elements}, but they highlight the importance of using changes in distribution to provide a signal about the causal structure.

Now instead let us compare the performance of our two hypotheses ($A\rightarrow B$ vs $B \rightarrow A$) in terms of how fast the two models adapt on a transfer distribution after having been trained on the training distribution.  We will assume
simple stochastic gradient descent on the parameters for this
adaptation but other procedures could be used, of course. Without loss of generality, let $A \rightarrow B$ be the correct causal model. To make the case stronger, let us consider that the change between the two distributions amounts to a random change in the parameters of the true $P(A)$ for the cause $A$ (because this will have an impact on the effect $B$, which can be picked up and reveal the causal direction). We do not assume that the learner knows what intervention was performed, unlike in more common approaches to causal discovery and controlled experiments. We only assume that some change happened and we try to exploit that to reveal structural causal information. 

\begin{figure}[H]
    \centering
    \ificml
    \hspace*{-4mm}\includegraphics[width=1.1\linewidth]{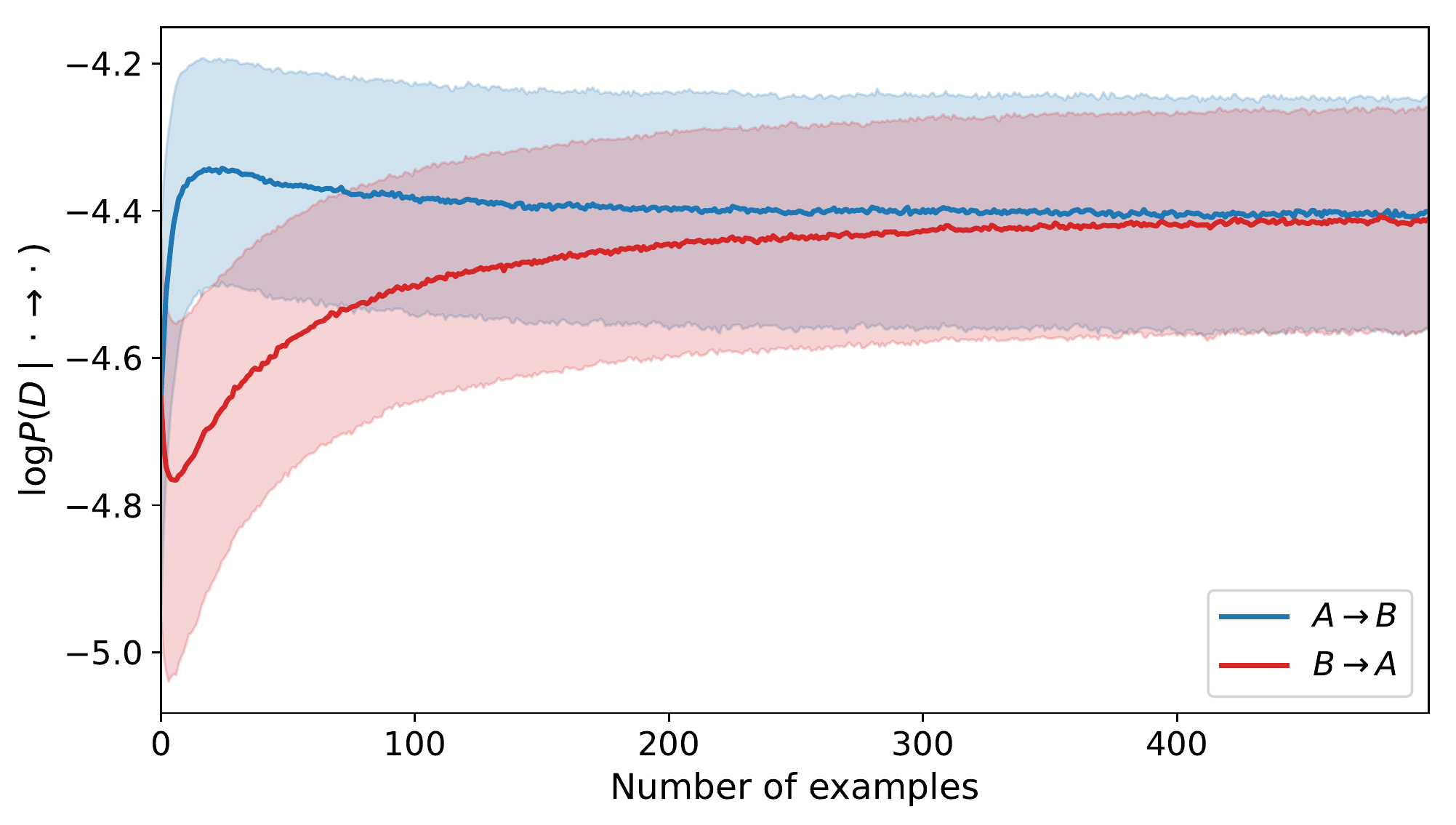}
    \vspace*{-4mm}
    \else
    \includegraphics[width=0.6\linewidth]{adaptation-transfer-distribution-discrete-2curves_2.pdf}
    \fi
    \caption{Adaptation to the transfer distribution, as more transfer distribution examples are seen by the learner (horizontal axis), in terms of the log-likelihood on the transfer distribution (on a large test set from the transfer distribution, tested after each update of the parameters). Here the model is discrete, with $N = 10$. Curves are the median over 10\,000 runs, with 25-75\% quantiles intervals, for both the correct causal model (blue, top) and the incorrect one (red, bottom). We see that the correct causal model adapts faster (smaller regret), and that the most informative part of the trajectory (where the two models generalize the most differently) is in the first 10-20 examples.}
    \label{fig:adaptation-curve}
\end{figure}

\ifplanB
\subsection{Experiments on Adaptation to the transfer distribution}
\label{app:adaptation-transfer}

\else
The learning curve of the two models on the transfer distribution is shown in Figure~\ref{fig:adaptation-curve}, illustrating how fast each model's expected log-likelihood on the transfer distribution improves in terms of the number of seen transfer examples, and how the peak of information about the correct model is obtained after only about 10 examples.
See Appendix~\ref{app:adaptation-transfer} for the details of the experimental setup. We clearly see that early adaptation is much faster for the correct causal model, so simply observing those curves would be enough to identify the causal direction.
\fi

\ifplanB
We present experiments comparing the learning curve of the correct causal model on the transfer distribution vs the learning curve of the incorrect model.
The adaptation with only a few gradient steps on data coming from a different, but related, transfer distribution is critical in getting a signal that can be leveraged by our meta-learning algorithm. To show the effect of this adaptation, and motivate our use of only a small amount of data from the transfer distribution, we experimented with a model on discrete random variables taking $N = 10$ possible values.

In this experiment, we fixed the underlying causal model to be $A \rightarrow B$, and trained the modules for each marginal and conditional distributions with maximum likelihood on a large amount of data from some training distribution, as explained in Appendix~\ref{sec:non-identifiability}. See also Appendix~\ref{sec:bivariate-experiment-discrete} and Table~\ref{tab:categorical-modules} for details on the definitions of these modules.

We then adapt all the modules on data coming from a transfer distribution, corresponding on an intervention on the random variable $A$ (i.e., the marginal $P(A)$ of the ground truth model is modified, while leaving $P(B\mid A)$ fixed). We used RMSprop for the adaptation, with the same learning rate. For assessing reproducibility and statistical robustness, the experiment was repeated over 100 different training distributions, and over 100 transfer distributions for each training distributions, leading to 10\,000 experiments overall. The procedure to acquire different training/transfer distributions is detailed in Appendix~\ref{sec:bivariate-experiment-discrete}.

In Figure \ref{fig:adaptation-curve}, we report the log-likelihoods of both models, evaluated on a large test set of 10\,000 from the transfer distribution. We can see that as the number of examples from the transfer distribution (equal to the number of adaptation steps) increases, the two models eventually reach the same log-likelihood, reflecting our observation from Appendix~\ref{sec:non-identifiability}. However the causal model $A \rightarrow B$ adapts faster than the other model $B \rightarrow A$, with the most informative part of the trajectory (where the difference is the largest) is within the first 10 to 20 examples.
\fi

\subsubsection{Parameter Counting Argument}
\label{sec:parameter-counting}

A simple parameter counting arguments helps us understand what we are observing in Figure~\ref{fig:adaptation-curve}. 
First, consider the expected gradient on the parameters of the different modules, during the adaptation phase to the transfer distribution, which we designate as adaptation episode, and corresponds to learning from a meta-example.
\begin{prop}
\label{prop:zero-gradient}
The expected gradient over the transfer distribution of the regret (accumulated negative log-likelihood during the adaptation episode) with respect to the module parameters is zero for the parameters of the modules that (a) were correctly learned in the training phase, and (b) have the correct set of causal parents, corresponding to the ground truth causal graph, if (c) the corresponding ground truth conditional distributions did not change from the training distribution to the transfer distribution. 
\end{prop}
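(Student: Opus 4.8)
The plan is to exploit the modular factorization of the joint together with the classical score-function (zero-expected-score) identity. First I would write the regret of the $A\rightarrow B$ model as the accumulated negative log-likelihood over the adaptation episode,
\[
\regret=\sum_t -\log P_{A\rightarrow B}(a_t,b_t)=\sum_t\big[-\log P_{A\rightarrow B}(a_t)-\log P_{A\rightarrow B}(b_t\mid a_t)\big].
\]
Because the module parameters in question, say $\theta_{B|A}$, appear only in the conditional factor $P_{A\rightarrow B}(b\mid a)$ and not in the marginal factor $P_{A\rightarrow B}(a)$, the gradient of each term with respect to $\theta_{B|A}$ collapses to $-\nabla_{\theta_{B|A}}\log P_{A\rightarrow B}(b_t\mid a_t)$. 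This is exactly where assumption (b) is used: in the true graph the only parent of $B$ is $A$, so the conditioning set the module uses coincides with the ground-truth one. I would also note that the root module $P_{A\rightarrow B}(A)$ satisfies (b) but not (c) (its mechanism is the one that changed), so it is correctly excluded from the conclusion.

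Next I would take the expectation over the transfer distribution $\tilde P^\star$ and condition on the parent value. For a single term this gives
\[
\mathbb{E}_{(a,b)\sim\tilde P^\star}\!\left[-\nabla_{\theta_{B|A}}\log P_{A\rightarrow B}(b\mid a)\right]=-\sum_a \tilde P^\star(a)\sum_b \tilde P^\star(b\mid a)\,\nabla_{\theta_{B|A}}\log P_{A\rightarrow B}(b\mid a).
\]
Now the two remaining hypotheses enter. By (c) the ground-truth mechanism is unchanged, $\tilde P^\star(b\mid a)=P^\star(b\mid a)$, and by (a) the module was learned correctly, $P_{A\rightarrow B}(b\mid a)=P^\star(b\mid a)$; combining these, the sampling distribution of $b$ given $a$ equals the model's own conditional. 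The inner sum is then the expected score under the model, and I would invoke the identity $\sum_b P_\theta(b\mid a)\nabla_\theta\log P_\theta(b\mid a)=\nabla_\theta\sum_b P_\theta(b\mid a)=\nabla_\theta 1=0$, valid for any normalized parametrization and in particular the softmax one used here, for every fixed $a$. Averaging this vanishing inner sum against $\tilde P^\star(a)$ leaves zero, which establishes the claim for a single step.

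Finally I would lift the per-step result to the full accumulated regret, and I expect this to be the main obstacle to state cleanly. The subtlety is that later terms in $\regret$ are evaluated at parameters already displaced by stochastic gradient steps, so individual gradients do not vanish even though their expectation does. The clean formulation evaluates the gradient at the correct trained parameters, turning the statement into the assertion that this point is a stationary point of the expected regret; the module begins the episode there, its expected update is therefore zero, and an inductive fixed-point argument keeps the per-step expected gradient at zero throughout, whence the accumulated expected gradient is zero by linearity. As a sanity check I would verify that the argument \emph{fails} for the anti-causal module $\theta_{A|B}$ of the $B\rightarrow A$ model: there (c) is violated, since intervening on $A$ changes $P(A\mid B)$ even while $P(B\mid A)$ stays invariant, so the sampling and model conditionals no longer agree and the score identity cannot be applied — which is precisely why the incorrect model incurs a nonzero expected gradient and adapts more slowly.
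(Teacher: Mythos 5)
Your proposal is correct and follows essentially the same route as the paper's proof in Appendix~\ref{sec:proof-zero-gradient}: decompose the log-likelihood across modules, use assumption (c) to replace the transfer conditional by the training conditional and assumption (a) to replace that by the model's own conditional, then kill the inner sum with the identity $\sum_v p_\theta(v)\,\partial_\theta \log p_\theta(v)=\partial_\theta 1=0$. Your closing remark about later steps of the episode being evaluated at displaced parameters is a fair observation; the paper sidesteps it by formally stating the result for the gradient of a single sample's log-likelihood at the trained parameters, which is exactly the stationary-point formulation you propose.
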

The proof is given in Appendix~\ref{sec:proof-zero-gradient}. The basic justification for this proposition is that for the modules that were correctly learned in the training distribution and whose ground truth conditional distribution did not change with the transfer distribution, the parameters already are at a maximum of the log-likelihood over the transfer distribution, so the expected gradient is zero.

As a consequence, the effective number of parameters that need to be adapted, when one has the correct causal graph structure, is reduced to those of the mechanisms that actually changed from the training to the transfer distribution. Since sample complexity - the number of training examples necessary to learn a model - grows approximately linearly~\citep{ehrenfeucht1989} with VC-dimension~\citep{Vapnik71}, and since VC-dimension grows approximately linearly in the number of parameters in linear models and neural networks~\cite{shalev-shwartz-book2014}, the learning curve on the transfer distribution will tend to improve faster for the model with the correct causal structure, for which fewer parameters need to be changed. Interestingly, we do not need to have the whole causal graph correctly specified before getting benefits from this phenomenon. If we only have part of the causal graph correctly specified and we change our causal hypothesis to include one more correctly specified mechanism, then we will obtain a gain in terms of the adaptation sample complexity (which shows up when the change in distribution does not touch that mechanism). 
This nice property also shows up in Proposition~
\ifoldloss
\ref{prop:unbiased} 
\else
\ref{prop:biased}
\fi
(Appendix~\ref{sec:many-factors}), showing a decoupling of the meta-objective across the independent mechanisms.

Let us consider the special case we have been studying up to now. We have four modules, two of which ($P_{A\rightarrow B}(A)$ and $P_{B\rightarrow A}(B)$) are marginal discrete distributions over $N$ values, which require each $N-1$ free parameters. The other two modules are conditional probability tables that have $N$ rows each with $N-1$ free parameters, i.e., a total of $N(N-1)$ free parameters. If $A\rightarrow B$ is the correct model and the transfer distribution only changed the true $P(A)$ (the cause), and if $P(B\mid A)$ had been correctly estimated on the training distribution, then for the correct model only $N-1$ parameters need to be re-estimated. On the other hand, because of Bayes' rule, under the incorrect model ($B \rightarrow A$), a change in $P(A)$ leads to new parameters for both $P(B)$ and $P(A\mid B)$, i.e., all $N(N-1)+(N-1)=N^2-1$ parameters must be re-estimated. In this case we see that sample complexity may be $O(N^2)$ for the incorrect model while it would be $O(N)$ for the correct model (assuming linear relationship between sample complexity and number of free parameters). Of course, if the change in distribution had been over $P(B\mid A)$ instead of $P(A)$, the advantage would not have been as great. This would motivate information gathering actions generally resulting in a very sparse change in the mechanisms.

\subsection{Smooth parameterization of the causal structure}

In the more general case with many more than two hypotheses for the structure of the causal graph, there will be an exponentially large set of possible causal structures explaining the data and we won't be able to enumerate all of them (and pick the best one after observing episodes of adaptation). However, we can parameterize our belief about an exponentially large set of hypotheses by keeping track of the probability for each directed edge of the graph to be present, i.e., specify for each variable $B$ whether some variable $A$ is a direct causal parent of $B$ (for all pairs $(A,B)$ in the graph). 
\ifplanB
We will develop such a smooth parametrization further in Appendix~\ref{sec:many-factors}, but it 
hinges on gradually changing our belief in the individual binary decisions associated with each edge of the causal graph, so we can jointly do gradient descent on all these beliefs at the same time.

In this section, we study the simplest possible version of this idea, representing that edge belief via a structural parameter $\gamma$ with $\sigm(\gamma)={\rm sigmoid}(\gamma)$, our believed probability that $A\rightarrow B$ is the correct choice. For that single pair of variables scenario, let us consider two explanations for the data (as in the above sections, for models $A\rightarrow B$ and $B\rightarrow A$), one with probability $p(A\rightarrow B)=\sigm(\gamma)$ and the other with probability $p(B\rightarrow A)=1-\sigm(\gamma)$. We can write down our transfer objective as a log-likelihood over the mixture of these two models. Note this is different from the usual mixture models, which assume separately for each example that it was sampled from one component or another with some probability. Here, we assume that all of the observed data was sampled from one component or the other. 
The transfer data regret (negative log-likelihood accumulated along the online adaptation trajectory) under that mixture is therefore as follows:
\begin{equation}
\label{eq:mixture-ll}
 \regret = -\log \left[ \sigm(\gamma) \mathcal{L}_{A \rightarrow B} + (1-\sigm(\gamma)) \mathcal{L}_{B \rightarrow A} \right]
\end{equation}
where $\mathcal{L}_{A\rightarrow B}$ and $\mathcal{L}_{B\rightarrow A}$ are the online likelihoods of both models respectively on the transfer data. They are defined as
\begin{align*}
    \mathcal{L}_{A\rightarrow B} &= \prod_{t=1}^{T}P_{A \rightarrow B}(a_{t}, b_{t}\,;\,\theta_{t})\\
    \mathcal{L}_{B\rightarrow A} &= \prod_{t=1}^{T}P_{B\rightarrow A}(a_{t}, b_{t}\,;\,\theta_{t}),
\end{align*}
where $\{(a_t,b_t)\}_t$ is the set of transfer examples for a given episode and $\theta_t$ aggregates all the modules' parameters as of time step $t$ (since the parameters could be updated after each observation of an example $(a_t,b_t)$  from the transfer distribution). $P_{\rm model}(a,b;\theta)$ is the likelihood of example $(a,b)$ under some {\em model} that has parameters $\theta$. 

The quantity of interest here is $\frac{\partial \regret}{\partial \gamma}$, which is our training signal for updating $\gamma$. In the experiments below, after each episode involving $T$ transfer examples we update $\gamma$ by doing one step of gradient descent, to reduce the transfer negative log-likelihood or regret $\regret$. What we are proposing is a meta-learning framework in which the inner training loop updates the module parameters (separately) as examples are seen (from either distribution being currently observed), while the outer loop updates the structural parameters (here it is only the scalar $\gamma$) with respect to the transfer negative log-likelihood.

The gradient of the transfer log-likelihood with respect to the structural parameter $\gamma$ is pushing $\sigm(\gamma)$ towards the posterior probability that the correct model is $A\rightarrow B$
and $(1-\sigm(\gamma))$ towards the posterior probability
that the correct model is $B\rightarrow A$:
\begin{prop}
\label{prop:posterior}
The gradient of the negative log-likelihood of the transfer data in Equation \eqref{eq:mixture-ll} wrt. the structural parameter $\frac{\partial \regret}{\partial \gamma}$ is given by
\begin{equation}
    \frac{\partial \regret}{\partial \gamma} = \sigma(\gamma) - P(A \rightarrow B \mid D_{2}),
\end{equation}
where $D_{2}$ is the transfer data, and $P(A \rightarrow B\mid D_{2})$ is the posterior probability of the hypothesis $A \rightarrow B$ (when the alternative is $B \rightarrow A$). Furthermore, this can be equivalently written as
\begin{equation}
    \frac{\partial \regret}{\partial \gamma} = \sigma(\gamma) - \sigma(\gamma + \Delta),
\end{equation}
where $\Delta = \log \mathcal{L}_{A \rightarrow B} - \log \mathcal{L}_{B \rightarrow A}$ is the difference between the log-likelihoods of the two hypotheses on the transfer data $D_{2}$.
\end{prop}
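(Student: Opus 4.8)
The plan is to differentiate the regret in Equation~\eqref{eq:mixture-ll} directly with respect to $\gamma$, treating the online likelihoods $\mathcal{L}_{A\rightarrow B}$ and $\mathcal{L}_{B\rightarrow A}$ as quantities that do not depend on $\gamma$: they are determined by the module parameters along the adaptation trajectory (the inner loop), not by the structural parameter (the outer loop). First I would apply the chain rule to $\regret = -\log[\sigm(\gamma)\mathcal{L}_{A\rightarrow B} + (1-\sigm(\gamma))\mathcal{L}_{B\rightarrow A}]$, producing a numerator $\sigm'(\gamma)(\mathcal{L}_{A\rightarrow B} - \mathcal{L}_{B\rightarrow A})$ over the mixture likelihood itself. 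The key algebraic lever is the sigmoid derivative identity $\sigm'(\gamma) = \sigm(\gamma)(1-\sigm(\gamma))$, which introduces exactly the factor needed to match the target expression.

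The second step is to identify the resulting fraction with a Bayesian posterior. I would define the posterior over the two hypotheses using $\sigm(\gamma)$ as the prior on $A\rightarrow B$ and $1-\sigm(\gamma)$ as the prior on $B\rightarrow A$, so that Bayes' rule gives $P(A\rightarrow B\mid D_2) = \sigm(\gamma)\mathcal{L}_{A\rightarrow B} / [\sigm(\gamma)\mathcal{L}_{A\rightarrow B} + (1-\sigm(\gamma))\mathcal{L}_{B\rightarrow A}]$. Computing $\sigm(\gamma) - P(A\rightarrow B\mid D_2)$ over a common denominator and simplifying, the numerator collapses to $-\sigm(\gamma)(1-\sigm(\gamma))(\mathcal{L}_{A\rightarrow B} - \mathcal{L}_{B\rightarrow A})$ over the same mixture likelihood, which is precisely the derivative obtained in the first step. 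This establishes the first identity.

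For the second form I would rewrite the posterior in log-odds coordinates. Dividing numerator and denominator of $P(A\rightarrow B\mid D_2)$ by $(1-\sigm(\gamma))\mathcal{L}_{B\rightarrow A}$ and using $\sigm(\gamma)/(1-\sigm(\gamma)) = e^{\gamma}$ together with $\mathcal{L}_{A\rightarrow B}/\mathcal{L}_{B\rightarrow A} = e^{\Delta}$, where $\Delta = \log\mathcal{L}_{A\rightarrow B} - \log\mathcal{L}_{B\rightarrow A}$, turns the posterior into $e^{\gamma+\Delta}/(1 + e^{\gamma+\Delta}) = \sigm(\gamma+\Delta)$. Substituting into the first identity immediately yields $\partial\regret/\partial\gamma = \sigm(\gamma) - \sigm(\gamma+\Delta)$.

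I do not anticipate a genuine obstacle, since every step is a routine calculation; the only things to watch are the bookkeeping of signs (recall $\regret$ is a negative log-likelihood) and making explicit the modeling assumption that $\mathcal{L}_{A\rightarrow B}$ and $\mathcal{L}_{B\rightarrow A}$ are held fixed with respect to $\gamma$ when the outer-loop gradient is taken. The conceptual point worth emphasizing is that the gradient takes the clean form of a prediction error between the current belief $\sigm(\gamma)$ and the Bayesian posterior induced by the transfer data $D_2$, which is exactly what makes the log-odds form $\sigm(\gamma+\Delta)$ so natural and explains why the update drives $\sigm(\gamma)$ toward the posterior of the correct causal direction.
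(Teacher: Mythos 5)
Your proposal is correct and follows essentially the same route as the paper's proof: differentiate the mixture log-likelihood using $\sigm'(\gamma)=\sigm(\gamma)(1-\sigm(\gamma))$, identify the resulting expression with $\sigm(\gamma)-P(A\rightarrow B\mid D_2)$ via Bayes' rule with prior $\sigm(\gamma)$, and pass to log-odds to obtain $\sigm(\gamma+\Delta)$ (the paper phrases this last step through the logit $\sigma^{-1}$, which is algebraically identical to your division by $(1-\sigm(\gamma))\mathcal{L}_{B\rightarrow A}$). Your explicit remark that $\mathcal{L}_{A\rightarrow B}$ and $\mathcal{L}_{B\rightarrow A}$ are held fixed with respect to $\gamma$ is a modeling assumption the paper also makes, only implicitly.
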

The proof is given in Appendix~\ref{sec:proof-posterior}. Note how this posterior probability is basically measuring which hypothesis is better explaining the episode transfer data $D_2$ overall along the adaptation trajectory. $D_2$ is a meta-example for updating the structural parameters like $\gamma$. Larger $\Delta$ of one hypothesis over the other leads to moving meta-parameters faster towards the favoured hypothesis. This difference in online accumulated log-likelihoods $\Delta$ also relates to log-likelihood scores in score-based methods for structure learning of graphical models \citep{koller2009probabilistic}\footnote{One can see $\log \mathcal{L}_{A \rightarrow B}$ as a score attributed to graph $A \rightarrow B$, analogously for $\log \mathcal{L}_{B \rightarrow A}$. The gradient is then pushing toward the graph with the highest score.}.

To find where SGD converges, note that the actual posterior depends on the prior $\sigm(\gamma)$ and thus keeps changing after each gradient step. We are really doing SGD on the expected value of $\regret$ over transfer sets $D_2$. Equating the gradient of this expected value to zero to look for the stationary convergence point, we thus see $\sigm(\gamma)$ on both sides of the equation, and we obtain convergence when the new value of $\sigm(\gamma)$ is consistent with the old value, as clarified in this proposition.
\begin{prop}
\label{prop:convergence}
Stochastic gradient descent (with appropriately decreasing learning rate) on $E_{D_2}[\regret]$ with steps from $\frac{\partial \regret}{\partial \gamma}$ converges towards $\sigm(\gamma)=1$ if $E_{D_2}[\log \mathcal{L}_{A \rightarrow B}]>E_{D_2}[\log \mathcal{L}_{B \rightarrow A}]$, or $\sigma(\gamma)=0$ otherwise.
\end{prop}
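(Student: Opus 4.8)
The plan is to reduce the convergence question to a deterministic analysis of the expected update direction, then read off the stable stationary points. First I would use Proposition~\ref{prop:posterior} to write the per-episode gradient as $\frac{\partial\regret}{\partial\gamma} = \sigm(\gamma) - \sigm(\gamma+\Delta)$ with $\Delta = \log\mathcal{L}_{A\rightarrow B} - \log\mathcal{L}_{B\rightarrow A}$, so that the outer-loop iteration $\gamma_{t+1} = \gamma_t - \eta_t \frac{\partial\regret}{\partial\gamma}$ is a Robbins--Monro stochastic approximation whose mean drift is $g(\gamma) := E_{D_2}[\frac{\partial\regret}{\partial\gamma}] = \sigm(\gamma) - E_{D_2}[\sigm(\gamma+\Delta)]$. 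With a learning rate satisfying $\sum_t \eta_t = \infty$ and $\sum_t \eta_t^2 < \infty$, standard stochastic-approximation theory guarantees that $\gamma_t$ tracks the ODE $\dot\gamma = -g(\gamma)$ and converges almost surely to the stable stationary set of $\bar\regret(\gamma) := E_{D_2}[\regret]$. It then remains to locate those stationary points and determine their stability.

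Second, I would identify the candidate limits and sign the drift. For every fixed $\Delta$, $\sigm(\gamma+\Delta)-\sigm(\gamma)\to 0$ as $\gamma\to\pm\infty$, so $g(\gamma)\to 0$ at both ends; these are exactly the claimed limits $\sigm(\gamma)\to 1$ and $\sigm(\gamma)\to 0$. To decide which end attracts, I would examine the drift $-g(\gamma) = E_{D_2}[\sigm(\gamma+\Delta)-\sigm(\gamma)]$. By the mean value theorem each integrand equals $\sigm'(\xi)\,\Delta$ for some $\xi$ between $\gamma$ and $\gamma+\Delta$ with $\sigm'(\cdot)>0$, so each term carries the sign of $\Delta$; to first order in $\Delta$ (equivalently, exactly when $\Delta$ has an almost-surely constant sign) the drift equals a positive multiple of $E_{D_2}[\Delta]$. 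Since this positive factor never vanishes at finite $\gamma$, the drift keeps the constant sign of $E_{D_2}[\Delta] = E_{D_2}[\log\mathcal{L}_{A\rightarrow B}] - E_{D_2}[\log\mathcal{L}_{B\rightarrow A}]$, pushing $\gamma$ monotonically toward $+\infty$ (so $\sigm(\gamma)\to 1$) when this quantity is positive and toward $-\infty$ (so $\sigm(\gamma)\to 0$) when it is negative, which is the claim. A complementary, fully rigorous route for pinning the minimizer to a boundary is to reparametrize by $u=\sigm(\gamma)\in(0,1)$: then $\bar\regret$ is the average of $-\log(u\mathcal{L}_{A\rightarrow B}+(1-u)\mathcal{L}_{B\rightarrow A})$, hence convex in $u$, and its one-sided endpoint derivatives determine whether the minimizer lands at $u=0$ or $u=1$.

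The hard part is that the expected drift is a $\gamma$-dependent, $\sigm'$-weighted average of terms whose signs track $\Delta$, so in full generality its sign need not coincide with the sign of $E_{D_2}[\Delta]$. Carrying out the convexity-in-$u$ computation, the exact condition for the minimizer at $u=1$ is $E_{D_2}[\mathcal{L}_{B\rightarrow A}/\mathcal{L}_{A\rightarrow B}]\le 1$, and symmetrically $E_{D_2}[\mathcal{L}_{A\rightarrow B}/\mathcal{L}_{B\rightarrow A}]\le 1$ for $u=0$; by Jensen these expected likelihood ratios are bounded below by $\exp(\mp E_{D_2}[\Delta])$, so $E_{D_2}[\Delta]>0$ is the natural mean-field criterion but does not on its own force the expected ratio below $1$, and since the product of the two ratios is at least $1$ by AM--HM there is a residual regime admitting an interior minimizer. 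I would therefore present the result in the linearized/constant-sign regime where the weighting is immaterial — precisely the self-consistent fixed-point picture described just before the statement, in which convergence occurs when $\sigm(\gamma)$ agrees with $E_{D_2}[\sigm(\gamma+\Delta)]$ — and note the likelihood-ratio threshold as the sharp form of the criterion.
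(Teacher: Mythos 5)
Your route differs from the paper's. The paper reparametrizes to $p=\sigm(\gamma)$, writes the KKT conditions for minimizing $E_{D_2}[\regret]$ over $p\in[0,1]$, and tries to rule out interior stationary points by showing they would force $E_{D_2}[\mathcal{L}_{B\rightarrow A}/M]=1$ (resp.\ $E_{D_2}[\mathcal{L}_{A\rightarrow B}/M]=1$), which it claims is impossible for $p\in(0,1)$ because ``$\mathcal{L}_{B\rightarrow A}\le M$ by construction''; having excluded interior solutions, it compares the objective at the two endpoints. You instead use convexity of $u\mapsto E_{D_2}[-\log(u\mathcal{L}_{A\rightarrow B}+(1-u)\mathcal{L}_{B\rightarrow A})]$ and locate the minimizer from the one-sided endpoint derivatives, obtaining the exact conditions $E_{D_2}[\mathcal{L}_{B\rightarrow A}/\mathcal{L}_{A\rightarrow B}]\le 1$ for $u=1$ and $E_{D_2}[\mathcal{L}_{A\rightarrow B}/\mathcal{L}_{B\rightarrow A}]\le 1$ for $u=0$. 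Those derivative computations are correct, and the convexity argument is cleaner than your first-paragraph mean-value-theorem discussion, which you rightly flag as only heuristic; let the convexity argument carry the proof.

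The reservation in your last paragraph is warranted, and it points at a defect in the paper's own argument rather than in yours. The inequality $\mathcal{L}_{B\rightarrow A}\le M$ is \emph{not} true by construction: $M$ is a convex combination of $\mathcal{L}_{A\rightarrow B}$ and $\mathcal{L}_{B\rightarrow A}$, so $\mathcal{L}_{B\rightarrow A}\le M$ holds only on those draws of $D_2$ for which $\mathcal{L}_{B\rightarrow A}\le\mathcal{L}_{A\rightarrow B}$. Hence $E_{D_2}[\mathcal{L}_{B\rightarrow A}/M]=1$ does not force $p=0$, and interior stationary points can exist. Concretely, if the ratio $r=\mathcal{L}_{A\rightarrow B}/\mathcal{L}_{B\rightarrow A}$ equals $e^{3}$ or $e^{-2}$ each with probability $1/2$, then $E_{D_2}[\log r]=1/2>0$, yet $E_{D_2}[1/r]>1$ and $E_{D_2}[r]>1$, so by your endpoint criterion the convex objective is decreasing at $u=0$ and increasing at $u=1$ and attains its minimum at an interior $u^{*}\in(0,1)$; gradient descent then converges to $\sigm(\gamma)=u^{*}$, not to $1$. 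So the proposition as stated needs an extra hypothesis --- your almost-sure constant-sign condition on $\Delta$, or more sharply $E_{D_2}[\mathcal{L}_{B\rightarrow A}/\mathcal{L}_{A\rightarrow B}]\le 1$ --- and your likelihood-ratio threshold is the correct characterization of when the belief saturates at the boundary. Presenting the result in that regime, as you propose, is the right call.
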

The proof is given in Section~\ref{sec:proof-convergence} of the Appendix, and shows that optimizing $\gamma$ will end up picking the correct hypothesis, i.e., the one that has the smallest regret (or fastest convergence), measured as the accumulated log-likelihood as adaptation proceeds on the transfer distributions sampled from the distribution $D_2$, which we can think of like a distribution over tasks, in meta-learning. This analogy with meta-learning also appears in our gradient-based adaptation procedure, which is linked to existing methods like the first-order approximation of MAML \citep{FinnAL17}, and its related algorithms \citep{NicholReptile2018}.
Algorithm~\ref{alg:main} (Appendix~\ref{sec:pseudo-code}) illustrates the general pseudo-code for the proposed meta-learning
framework.

\subsubsection{Experimental Results}

To illustrate the convergence result from Proposition~\ref{prop:convergence}, we experiment with learning the structural parameter $\gamma$ in a bivariate model, with discrete random variables, each taking $N=10$ and $N = 100$ possible values. In this experiment, we assume that the underlying causal model (unknown to the algorithm) is fixed to $A \rightarrow B$, so that we want the structural parameter to eventually converge to $\sigma(\gamma) = 1$. The details of the experimental setup can be found in Appendix~\ref{sec:bivariate-experiment-discrete}.


\begin{figure}[ht]
    \centering
    \ificml
    \includegraphics[width=0.99\linewidth]{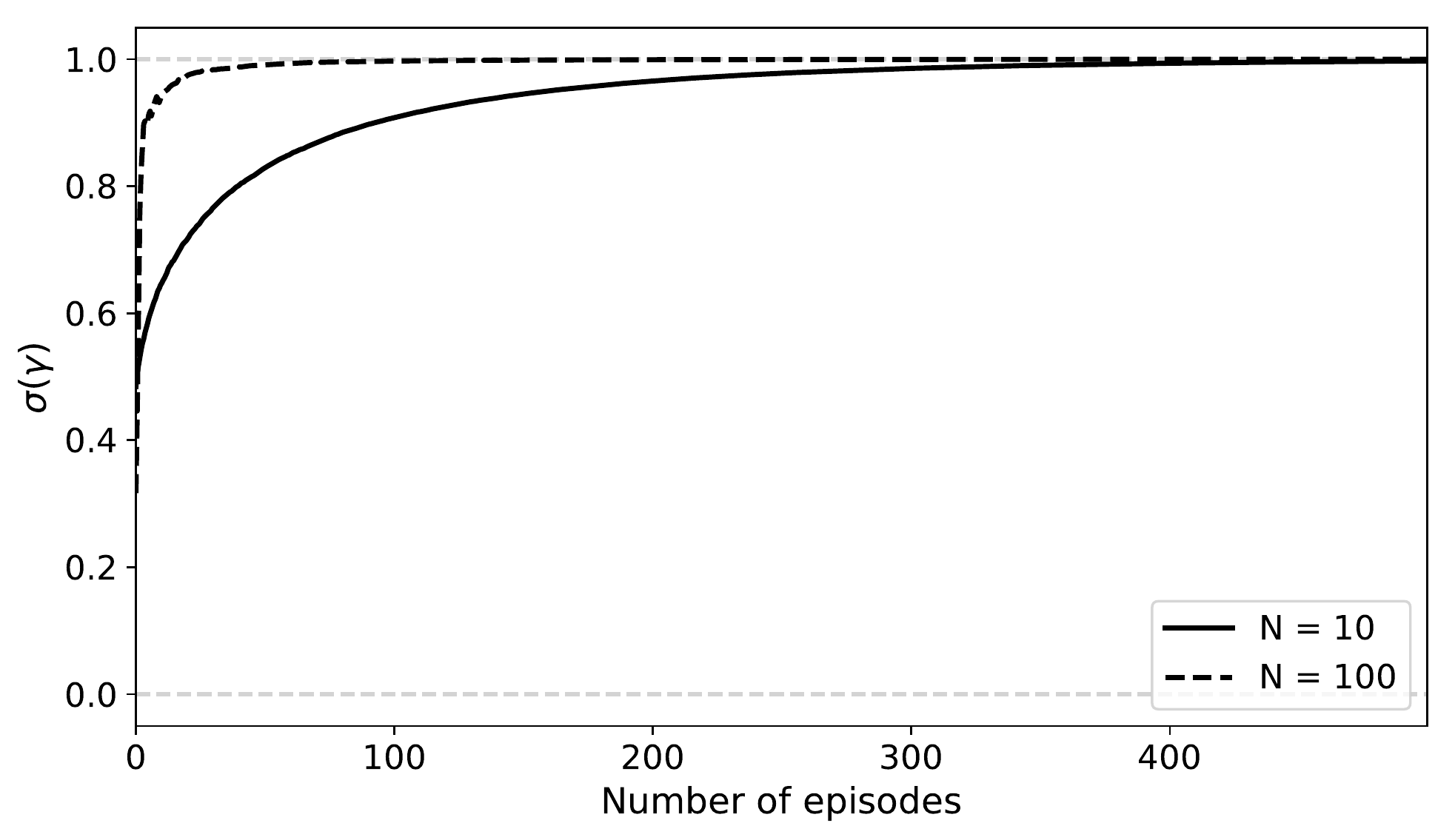}
    \else
    \includegraphics[width=0.6\linewidth]{alphas-metatrain-discrete-categorical.pdf}
    \fi
    \caption{Evolution of model's belief $p(A\rightarrow B) = \sigma(\gamma)$ on a bivariate model, with discrete random variables ($N = 10$ \& $N = 100$). The horizontal axis represents the number of episodes (i.e., meta-examples) seen during meta-training, which corresponds to the number of SGD updates of the structural parameter $\gamma$.}
    \label{fig:two-variables-alphas-discrete}
\end{figure}

In Figure~\ref{fig:two-variables-alphas-discrete}, we show the evolution of $\sigma(\gamma)$ (which is the model's belief of $A\rightarrow B$ being the correct causal model) as the number of episodes increases. Starting from an equal belief for both $A \rightarrow B$ and $B \rightarrow A$ to occur ($\sigma(\gamma) = 0.5$), the structural parameter converges to $\sigma(\gamma) = 1$ within $500$ episodes. 

This observation is consistent across a range of domains, including models with multimodal or multivariate continuous variables, and different parametrizations of the models. In Appendix~\ref{sec:MLP2}, we present results for two discrete variables but using MLPs to parametrize the conditional distributions, and where there are more causal hypotheses: we consider one binary choice for each directed edge in the graph, to decide
whether one variable is a direct causal parent or not. Figure~\ref{fig:mlp_scm} shows that the correct causal graph is quickly recovered. To estimate the gradient, we use
a generalization of the regret loss (introduced above, Equation~\eqref{eq:mixture-ll})
and its gradient,
described in Appendix~\ref{sec:many-factors}.

\begin{figure}[ht]
    \centering
    \ificml
    \includegraphics[width=\linewidth]{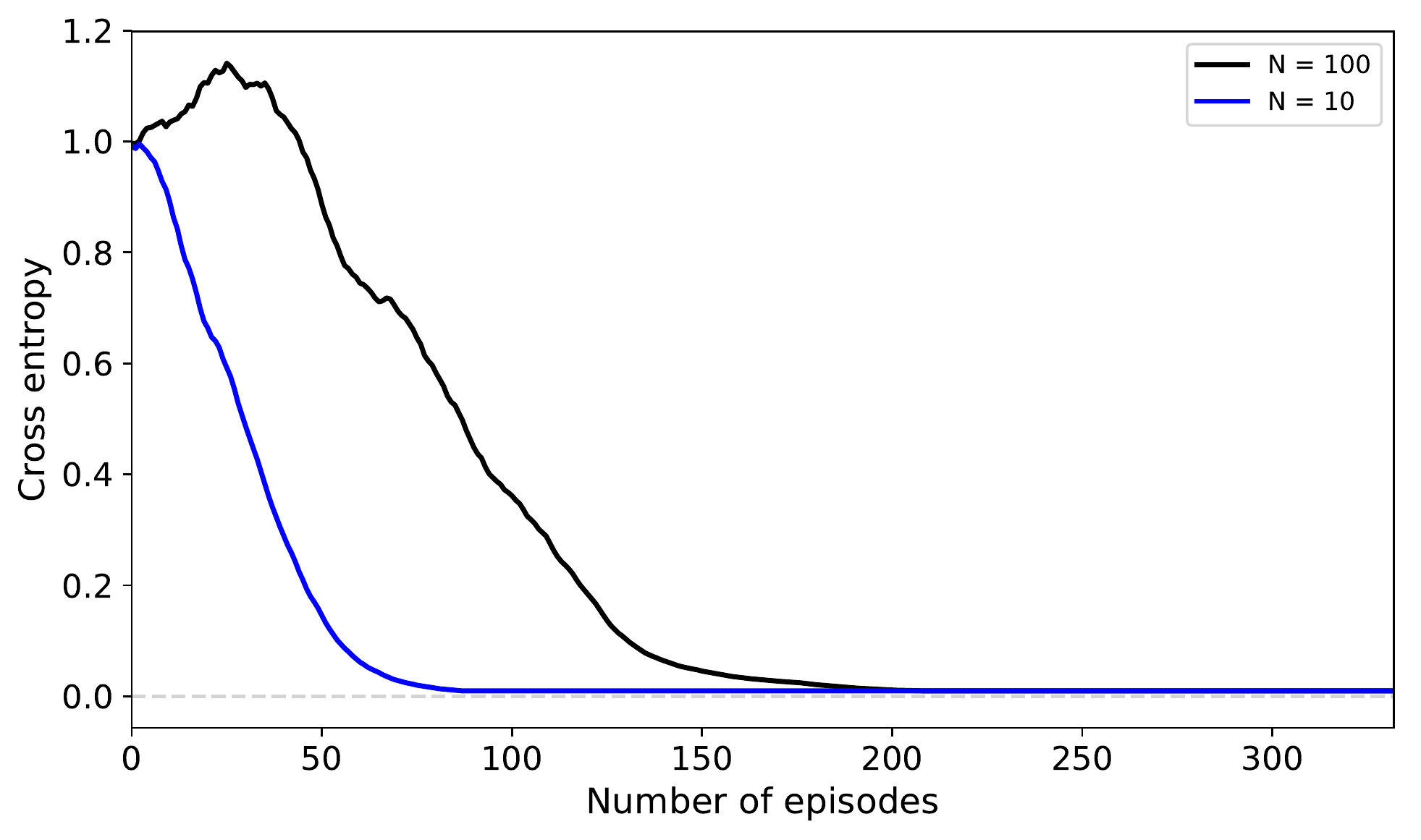}
    \else
    \includegraphics[width=0.6\linewidth]{MLP-discrete-categorical.pdf}
    \fi
    \caption{Cross entropy between the ground-truth SCM structure and the learned SCM structure. Each MLP is trained to predict the conditional distribution associated with each discrete variable
    with $N$ categories ($10$ and $100$ in this case), given its parents. Between 50 and 100 meta-examples are sufficient to recover the causal structure.}
    \label{fig:mlp_scm}
\end{figure}

In Appendix~\ref{sec:continuous}, we consider the case of continuous
scalar multimodal variables. The ground truth joint distribution is obtained by making the effect $B$ a non-linear function $f(A)$ of cause $A$,
where $f$ is a randomly generated spline. Figure~\ref{fig:encoder_scm}
shows an example of a resulting joint distribution. We model the
conditionals with mixture density networks~\citep{bishop1994mixture}
and the marginals by a Gaussian mixture. We obtain results that are similar
to the discrete case, with the correct causal interpretation
being recovered quickly, as illustrated in Figure~\ref{fig:encoder_gamma_evo}.

We also show in Appendix~\ref{sec:linear-gaussian} results on models with two continuous random variables, each being distributed as a multivariate Gaussian, with $N = 10$ dimensions. Similar to the experiment with discrete random variables, the same argument about parameter counting mentioned in Section~\ref{sec:parameter-counting} holds here. Again, we obtain results consistent with the previous examples, where the structural parameter $\gamma$ converges to $1$, effectively recovering the correct causal model $A \rightarrow B$.

\section{Representation Learning}

So far, we have assumed that the system has unrestricted access to the true underlying causal variables, $A$ and $B$. However in many realistic scenarios for learning agents, the observations available to the learner might not be instances of the true causal variables but sensory-level data instead, like pixels and sounds. If this is the case, our working assumption -- that the correct causal graph will be sparsely connected, made of independent components, and affected sparsely by distributional shifts -- can not be expected to hold true in general in the space of observed variables. To tackle this, we propose to follow the deep learning objective of disentangling the underlying causal variables~\citep{Bengio-Courville-Vincent-TPAMI2013}, and learn a representation in which these properties hold. In the simplest form of this setting, the learner must map its raw observations to a hidden representation space $H$ via an encoder $\mathcal E$. The encoder is trained such that the hidden space $H$ helps to optimize the meta-transfer objective described above, i.e., we consider the encoder, along with $\gamma$, as part of the set of structural or meta-parameters to be optimized with respect to the meta-transfer objective.

To study this simplified setting, we consider that our raw observations $(X, Y)$ originate from the true causal variables $(A, B)$ via the action of a ground truth decoder $\mathcal D$ (or generator network) that the learner is not aware of but is implicitly trying to invert, as illustrated in Figure~\ref{fig:encoder_arch}. The variables $A$, $B$, $X$ and $Y$ are assumed to be scalars, and we first consider $\mathcal D$ be a rotation matrix such that:
\begin{equation}
\begin{bmatrix}
X \\ Y
\end{bmatrix} = R(\theta_{\mathcal D}) 
\begin{bmatrix}
A \\ B
\end{bmatrix}
\end{equation}
The encoder is set to another rotation matrix, one that maps the observations $X, Y$ to the hidden representation $U, V$ as follows: 
\begin{equation}
\begin{bmatrix}
U \\ V
\end{bmatrix} = R(\theta_{\mathcal E}) 
\begin{bmatrix}
X \\ Y
\end{bmatrix}
\end{equation}
 The causal modules are now to be trained on the variables $U$ and $V$ in the same way as detailed in Section~\ref{sec:two-observed-variables}, as if they were observed directly. Indeed, if the encoder is valid one would obtain either $(U, V) = (A, B)$ or $(U, V) = (B, A)$ up to a negative sign, but we say in that case and without loss of generality that $(U, V)$ recovered $(A, B)$, corresponding to the solution $\theta_{\mathcal{E}} = -\theta_{\mathcal{D}}$. In this case, the model $U \to V$ is causal and should therefore have an advantage over the anticausal model $V \to U$, as far as adaptation speed on the transfer distribution is concerned. However, if the encoder is not valid, one would obtain superpositions of the form:
\begin{align}
U &= \cos(\theta) A - \sin(\theta) B\\
V &= \sin(\theta) A + \cos(\theta) B
\end{align}
where $\theta = \theta_{\mathcal{E}} + \theta_{\mathcal{D}}$. In the extremum where $\theta = \frac{\pi}{4}$, it is clear that the model $U \to V$ will not have an advantage over the model $V \to U$ in terms of regret on the transfer distribution. However, the question we are interested in is whether it is possible to learn the encoder $\theta_{\mathcal{E}}$. 
We verify this experimentally using Algorithm~\ref{alg:main}, but where the meta-parameters are now both $\gamma$ (choosing between cause and effect which is which) and the parameters
of the encoder (here the angle of a rotation matrix).
The details of that experiment are provided in Appendix~\ref{sec:encoder-experiment}, which illustrates -- see Figure~\ref{fig:encoder_evo} -- how the proposed objective can disentangle (here in a very simple setting) the ground truth variables (up to permutation).

\begin{figure}[ht]
    \centering
    \ificml
    \includegraphics[width=\linewidth]{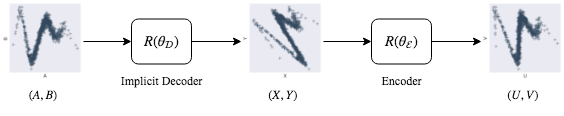}
    \vspace*{-5mm}
    \else
    \includegraphics[width=0.5\linewidth]{enc_diag_new.png}
    \fi
    \caption{The complete computational graph. The variables $(A, B)$ are assumed to originate from the true underlying causal distribution, but the observations available to the learner are $(X, Y)$ samples, which are obtained from $(A, B)$ via the action of an implicit (a priori unknown) decoder $R(\theta_{\mathcal{D}})$. The encoder $R({\theta_{\mathcal{E}}})$ must be learned to undo the action of the (unknown) decoder and thereby recover the true causal variables.}
    \vspace*{-3mm}
    \label{fig:encoder_arch}
\end{figure}

\begin{figure}[ht]
    \centering
    \ificml
    \includegraphics[width=\linewidth]{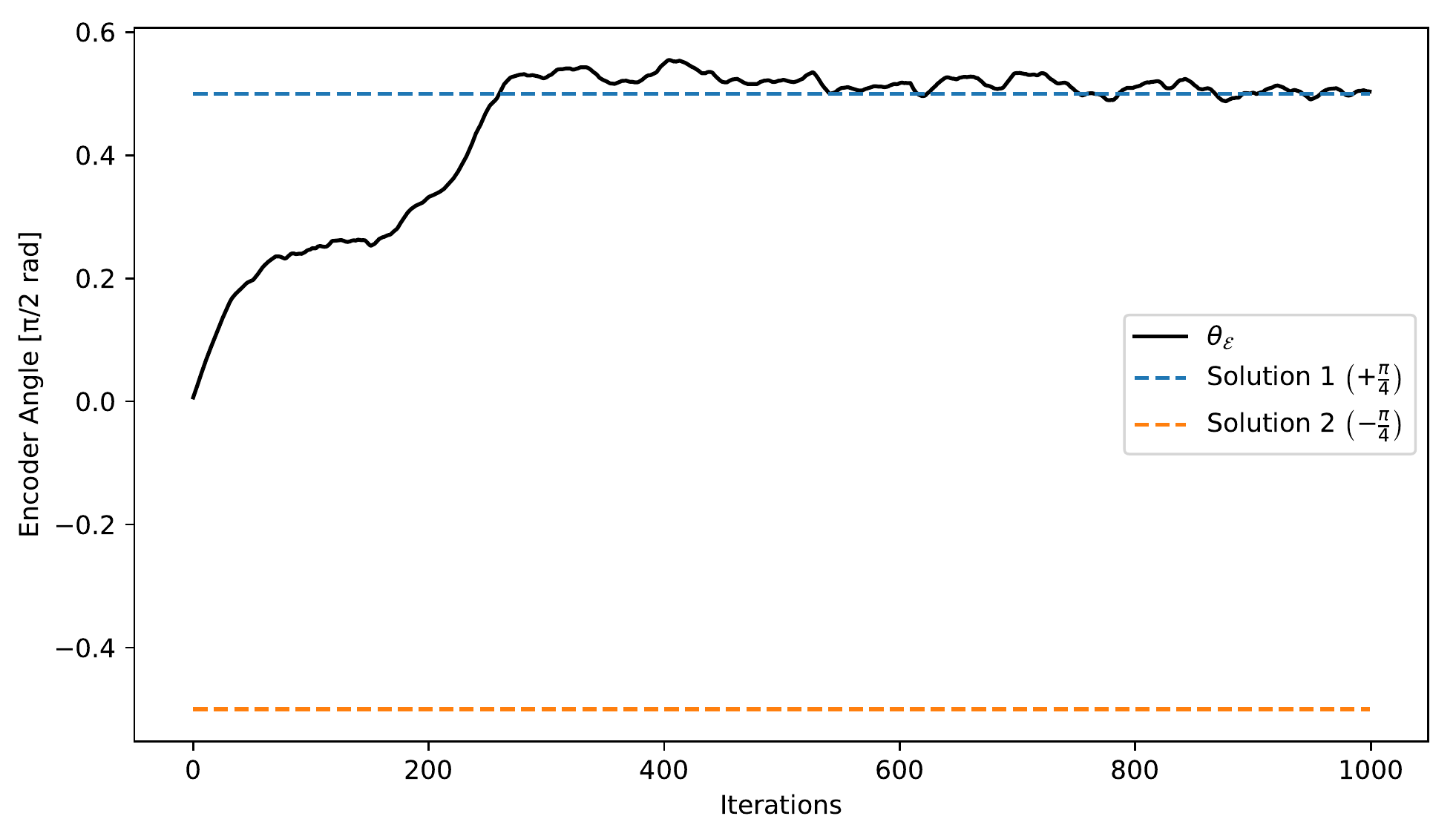}
    \vspace*{-5mm}
    \else
    \includegraphics[width=0.6\linewidth]{theta-evo.pdf}
    \fi
    \caption{Evolution of the encoder parameter $\theta_{\mathcal{E}}$ as training progresses. We set the parameter $\theta_{\mathcal{D}}$ of the implicit decoder to $-\frac{\pi}{4}$: this corresponds to two valid solutions for the encoder, namely $+\frac{\pi}{4}$ and $-\frac{\pi}{4}$. For the former, we obtain $\theta = 0$ corresponding to the correct causal graph $U \to V$; and for the latter, $\theta = \frac{\pi}{2}$, corresponding to the correct causal graph $V \to U$. The encoder parameter $\theta_{\mathcal{E}}$ is trained jointly with the structural parameter, and we find that it converges to one of the two valid solutions. Further details and the corresponding evolution of the structural parameter can be found in Appendix~\ref{sec:encoder-experiment}. 
    }
    \vspace*{-3mm}
    \label{fig:encoder_evo}
\end{figure}

\section{Related Work}


Although this paper focuses on the causal graph, the proposed objective is motivated by the more general question of discovering the underlying causal variables (and their dependencies) that explain the environment of a learner and make it possible for that learner to plan appropriately. The discovery of underlying explanatory variables has come under different names, in particular the notion of disentangling underlying variables~\citep{Bengio-Courville-Vincent-TPAMI2013}. As stated already by \citet{Bengio-Courville-Vincent-TPAMI2013} and clearly demonstrated by ~\citet{Locatello2018ChallengingCA},
assumptions, priors or biases are necessary to identify the underlying explanatory variables. The latter paper~\citep{Locatello2018ChallengingCA} also reviews and evaluates recent work on disentangling, and discusses different metrics that have been proposed. An extreme view of disentangling is that
the explanatory variables should be marginally independent, and many deep generative models~\citep{Goodfellow-et-al-2016-Book} and independent component analysis models~\citep{Hyvarinen-2001,Hyvarinen2018NonlinearIU} are built on this assumption. However, the kinds of high-level variables that we manipulate with natural language are not marginally independent: they are related to each other through statements that are usually expressed in sentences (e.g., a classical symbolic AI fact or rule), involving only a few concepts at a time. This kind of assumption has been proposed to help discover such linguistically relevant high-level representations from raw observations, such as the 
consciousness prior~\citep{Bengio-consciousness-arxiv2017}, with the idea that humans focus at any particular
time on just a few concepts that are present to our consciousness. The work presented here could provide an interesting meta-learning objective to help learn such encoders as well as figure out how the resulting variables are related to each other. In that case, one should distinguish two important assumptions: the first one is that the causal graph is sparse (has few edges, as in the consciousness prior~\citep{Bengio-consciousness-arxiv2017} and in some methods to learn Bayes net structure, e.g.~\citep{Schmidt2007LearningGM}); and the second one is that it changes sparsely due to interventions (which is the focus of this work). 
Approaches for Bayesian network structure learning based on discrete search over model structures and simulated annealing are reviewed in \citet{heckerman1995learning}. There, it has been common to use Minimum Description Length (MDL) principles 
to score and search over models \citet{lam1993using,friedman1998learning}, or the Bayesian Information Criterion (BIC) to search for models with high relative posterior probability \cite{heckerman1995learning}. Prior work such as \citet{heckerman1995learning} has also relied upon purely observational data, without the possibility of interventions and therefore focused on learning likelihood or hypothesis equivalence classes for network structures. 
Since then, numerous methods have also been devised to infer the causal direction from purely observational data~\citep{peters2017elements}, based on specific, generally parametric assumptions, on the underlying causal graph.
Pearl's seminal work on do-calculus \cite{pearl1995causal,pearl2009causality,bareinboim2016causal} lays a foundation for expressing the impact of interventions on probabilistic graphical models -- we use it in our work.
In contrast, here we are proposing a meta-learning objective function for learning causal structure, not requiring any specific constraints on causal graph structure, only on the sparsity of the changes in distribution in the correct causal graph parametrization.

Our work is also related to other recent advances in causation, domain adaptation and transfer learning.  
\citet{magliacane2018domain} have sought to identify a subset of features that lead to the best predictions for a variable of interest in a source domain such that the conditional distribution of the variable of interest given these features is the same in the target domain. \citet{johansson2016learning} examine counterfactual inference and formulate it as a domain adaptation problem. 
\citet{shalit2017estimating} propose a technique called counterfactual regression for estimating individual treatment effects from observational data. 
\citet{rojas2018invariant} propose a method to find an  optimal subset that makes the target
independent from the selection variables. To do so, they make the assumption that if the
conditional distribution of the target given some subset is invariant across different
source domains, then this conditional distribution must also be the same in the target domain. 
\citet{parascandolo2017learning} propose an algorithm to recover a set of independent causal  mechanisms
by establishing competition between mechanisms, hence driving specialization. \citet{alet2018modular} proposed a meta learing algorithm to recover a set of specialized modules, but did not establish any connections to causal mechanisms. More recently, \citet{dasgupta2019causal} adopted a meta-learning approach to draw causal inferences from purely observational data. 





\section{Conclusion and Future Work}

We have established in very simple bivariate settings that the rate at which a learner adapts to sparse changes in the distribution of observed data can be exploited to select or optimize causal structure and disentangle the causal variables. This relies on the assumption that with the correct causal structure, those distributional changes are localized and sparse. We have demonstrated these ideas through theoretical results as well as experimental validation. See
\ificml
\url{https://github.com/ec6dde01667145e58de60f864e05a4/CausalOptimizationAnon}
\else
\url{https://github.com/authors-1901-10912/A-Meta-Transfer-Objective-For-Learning-To-Disentangle-Causal-Mechanisms}
\fi
for 
\ificml
anonymized
\fi
source code of the experiments.

This work is only a first step in the direction of optimizing causal structure based on the speed of adaptation to modified distributions. On the experimental side, many settings other than those studied here should be considered, with different kinds of parametrizations, richer and larger causal graphs, different kinds of optimization procedures, etc. Also, much more needs to be done in exploring how the proposed ideas can be used to learn good representations in which the causal variables are disentangled, since we have only experimented at this point with the simplest possible encoder with a single degree of freedom. Scaling up these ideas would permit their application towards improving the way in which learning agents deal with non-stationarities, and thus improving sample complexity
and robustness of learning agents.

\ificml
\else
\section*{Acknowledgements}
We would like to acknowledge support for this project from NSERC, CIFAR and Canada Research Chairs, as well as the feedback from Rémi Le Priol, Isabelle Lacroix, Alexandre Piché, and Akram Erraqabi. AG would like to thank Sergey Levine, Chelsea Finn, Michael Chang, Abhishek Gupta for useful discussions.
\fi


\vskip 0.2in
\bibliography{causality}
\bibliographystyle{icml2019}

\clearpage
\newpage

\appendix
\counterwithin{figure}{section} 
\counterwithin{table}{section}

\ificml
\begin{strip}
\begin{center}
    SUPPLEMENTARY MATERIAL
    \vspace*{1cm}
\end{center}
\end{strip}
\fi

\section{Results on Non-Identifiability of Causal Structure}
\label{sec:non-identifiability}
We show here that the maximum likelihood estimation of both models specified in Equation \eqref{eq:bivariate-models} yields the same estimated distribution over $A$ and $B$, i.e., the joint likelihood on the training distribution is not sufficient to distinguish the $A \rightarrow B$ and $B\rightarrow A$ causal models, in the non-parametric case (no assumption at all on the family of distributions). Let
\begin{align*}
    \theta_{i} &= P_{A \rightarrow B}(A = i) & \theta_{j|i} &= P_{A \rightarrow B}(B=j\mid A=i)\\
    \eta_{j} &=P_{B \rightarrow A}(B=j) & \eta_{i|j}&=P_{B \rightarrow A}(A=i\mid B=j).
\end{align*}
We now state the maximum likelihood estimators for each models:
\begin{align}
  \hat{\theta}_i &= n_i / n &\hat{\theta}_{j|i} &= n_{ij} / n_i \nonumber\\
  \hat{\eta}_{j} &= n_j / n &\hat{\eta}_{i|j} &= n_{ij} / n_j
\end{align}
where $n$ is the total number of observations, $n_i$ the number of times we observed $A=i$, $n_j$ the number of times we observed $B=j$ and $n_{ij}$ the number of times we observed $A=i$ and $B=j$ jointly. We can now compute the likelihood for each model:
\begin{align}
  \hat{P}_{A \rightarrow B}(A, B) &= \hat{\theta}_i \hat{\theta}_{j|i} = n_{ij}/n \nonumber\\
  \hat{P}_{B \rightarrow A}(A, B) &= \hat{\eta}_j \hat{\eta}_{i|j} = n_{ij}/n
\end{align}
which is what we intended to show. To illustrate this result, we also experiment with learning the modules for both models $A \rightarrow B$ and $B \rightarrow A$ with SGD. In Figure~\ref{fig:generalization-same-distribution}, we show the difference in log-likelihoods between these two models, evaluated on training and test data sampled from the same distribution, during training. We can see that while the model $A \rightarrow B$ fits the data faster than the other model (corresponding to a positive difference in Figure~\ref{fig:generalization-same-distribution}), both models achieve the same log-likelihoods on both models at convergence. This shows that the two models are indistinguishable based on data sampled from the same distribution, even on test data.

\begin{figure*}
    \centering
    \includegraphics[width=0.99\linewidth]{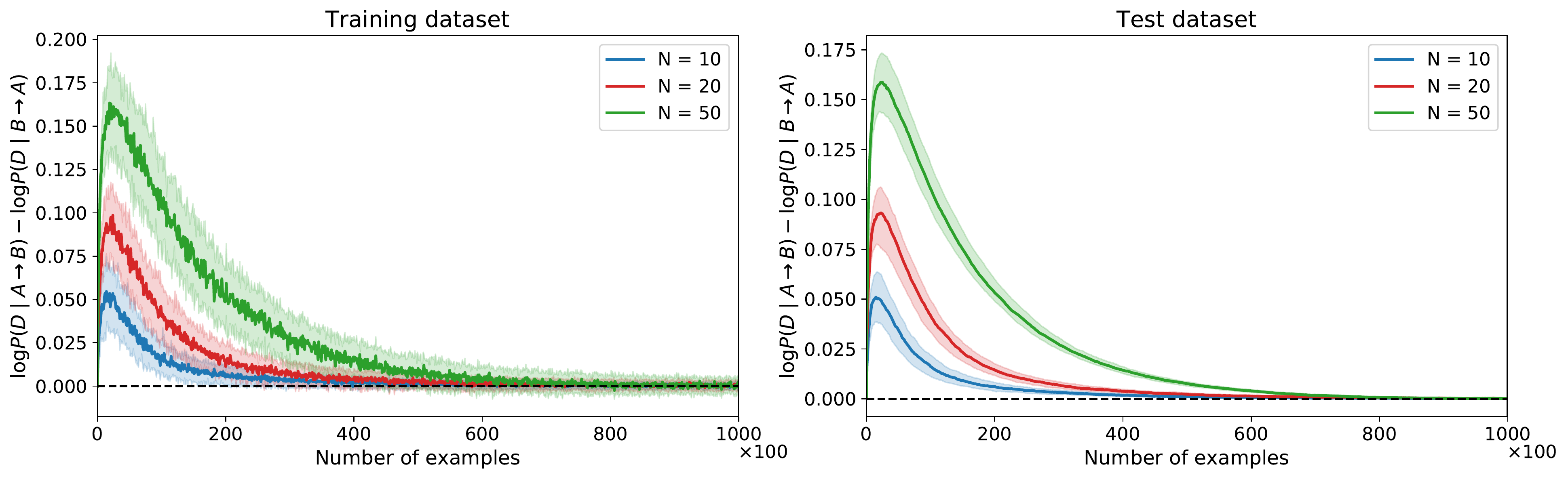}
    \caption{Difference in log-likelihoods between the two models $A \rightarrow B$ and $B \rightarrow A$ on training and test data from the same distribution on discrete data, for different values of $N$, the number of discrete values per variable. Once fully trained, both models become indistinguishable from their log-likelihoods only, even on test data. The solid curves represent the median values over 100 different runs, and the shaded areas their 25-75 quantiles.}
    \label{fig:generalization-same-distribution}
\end{figure*}

\ifplanB
\else
\section{Adaptation to the transfer distribution}
\label{app:adaptation-transfer}
The adaptation with only a few gradient steps on data coming from a different, but related, transfer distribution is critical in getting a signal that can be leveraged by our meta-learning algorithm. To show the effect of this adaptation, and motivate our use of only a small amount of data from the transfer distribution, we experimented with a model on discrete random variables taking $N = 10$ possible values.

In this experiment, we fixed the underlying causal model to be $A \rightarrow B$, and trained the modules for each marginal and conditional distributions with maximum likelihood on a large amount of data from some training distribution, as explained in Appendix~\ref{sec:non-identifiability}. See also Appendix~\ref{sec:bivariate-experiment-discrete} and Table~\ref{tab:categorical-modules} for details on the definitions of these modules.

We then adapt all the modules on data coming from a transfer distribution, corresponding on an intervention on the random variable $A$ (ie. the marginal $P(A)$ of the ground truth model is modified, while leaving $P(B\mid A)$ fixed). We used RMSprop for the adaptation, with the same learning rate. The experiment was repeated over 100 different training distributions, and over 100 transfer distributions for each training distributions, leading to 10\,000 experiments overall. The procedure to acquire different training/transfer distributions is details in Appendix~\ref{sec:bivariate-experiment-discrete}.

In Figure \ref{fig:adaptation-curve}, we report the log-likelihoods of both models, evaluated on a large test set of 10\,000 from the transfer distribution. We can see that as the number of examples from the transfer distribution (equal to the number of adaptation steps) increases, the two models eventually reach the same log-likelihood, reflecting our observations from Appendix~\ref{sec:non-identifiability}. However the causal model $A \rightarrow B$ adapts faster than the other model $B \rightarrow A$, with the most informative part of the trajectory (where the difference is the largest) is within the first 10 to 20 examples.
\fi

\section{Proof of the Zero-Gradient Proposition}
\label{sec:proof-zero-gradient}
\setcounter{prop}{\getrefnumber{prop:zero-gradient}}
\addtocounter{prop}{-1}

Let us restate more formally and prove Proposition~\ref{prop:zero-gradient}. 
\ifplanB
\else
We use the same notation as in Section~\ref{sec:many-factors}.
\fi
\begin{prop}
Consider conditional probability modules $P_{\theta_i}(V_i | {\rm pa}(i,V,B_i))$ where $B_{ij}=1$ indicates that $V_j$ is among the parents ${\rm pa}(i,V,B_i)$ of $V_i$ in a directed acyclic causal graph.
Consider ground truth training distribution $P_1$ and transfer distribution $P_2$ over these variables, and ground truth causal structure $B$. The joint log-likelihood $\mathcal{L}(V)$ for a sample $V$ with respect to the module parameters $\theta$ decomposed into module parameters $\theta_i$ is \mbox{$\mathcal{L}(V)=\sum_i \log P_{\theta_i}(V_i | {\rm pa}(i,V,B_i))$.} 
If (a) a model has the correct causal structure $B$, and (b) it been trained perfectly on $P_1$, leading to estimated parameters $\theta$, and (c) the ground truth $P_1$ and $P_2$ only differ from each other only for some $P(V_i | {\rm pa}(i,V,B_i))$ for $i \in C$, then $E_{V\sim P_2}[\frac{\partial \mathcal{L}(V)}{\partial \theta_i}]=0$ for $i \notin C$.
\end{prop}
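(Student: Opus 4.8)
The plan is to exploit the additive decomposition of the log-likelihood across modules, reduce the expected gradient to a conditional expectation of the score function of a single correctly-specified module, and then invoke the elementary fact that the expected score of a well-specified model vanishes.

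First I would observe that, since $\theta$ is partitioned into disjoint sub-vectors $\theta_i$, one per module, and $\mathcal{L}(V)=\sum_j \log P_{\theta_j}(V_j\mid {\rm pa}(j,V,B_j))$, only the $j=i$ summand depends on $\theta_i$. Hence $\frac{\partial \mathcal{L}(V)}{\partial \theta_i}=\frac{\partial}{\partial \theta_i}\log P_{\theta_i}(V_i\mid {\rm pa}(i,V,B_i))$, so the cross-module terms drop out before any expectation is taken. Next I would apply the tower property to split $E_{V\sim P_2}$ into an outer expectation over the parent values $z={\rm pa}(i,V,B_i)$ under $P_2$ and an inner expectation over $V_i$ given $z$ under the ground-truth conditional $P_2(\cdot\mid z)$:
\[
E_{V\sim P_2}\!\left[\frac{\partial \mathcal{L}(V)}{\partial \theta_i}\right]
= E_{z\sim P_2}\!\left[\sum_{v_i} P_2(v_i\mid z)\,\frac{\partial}{\partial \theta_i}\log P_{\theta_i}(v_i\mid z)\right].
\]

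The crux is then to show the inner sum vanishes for every fixed $z$. Using assumptions (a) and (b), perfect training on $P_1$ with the correct parent set recovers the true conditional, i.e. $P_{\theta_i}(\cdot\mid z)=P_1(\cdot\mid z)$; and since $i\notin C$, assumption (c) gives $P_1(\cdot\mid z)=P_2(\cdot\mid z)$. Substituting $P_2(v_i\mid z)=P_{\theta_i}(v_i\mid z)$ turns the inner sum into the expected score of a correctly-specified model, $\sum_{v_i} P_{\theta_i}(v_i\mid z)\,\partial_{\theta_i}\log P_{\theta_i}(v_i\mid z)=\sum_{v_i}\partial_{\theta_i}P_{\theta_i}(v_i\mid z)=\partial_{\theta_i}\!\sum_{v_i}P_{\theta_i}(v_i\mid z)=\partial_{\theta_i}1=0$, where normalization of $P_{\theta_i}(\cdot\mid z)$ for every $\theta_i$ is what kills the term. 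Taking the outer expectation over $z$ preserves the zero and yields the claim.

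The main obstacle I anticipate is the rigorous justification of the substitution $P_{\theta_i}(\cdot\mid z)=P_1(\cdot\mid z)$ in the third step: this is precisely where assumptions (a) and (b) must be combined, since correct structure guarantees that the population log-likelihood decouples into one term per module, so that maximizing it recovers each true conditional, provided the module family is expressive enough to represent it (automatic in the tabular/softmax parametrization used here). A secondary, purely technical point is the interchange of $\partial_{\theta_i}$ with the sum over $v_i$ (or the integral, in the continuous case), which is immediate for finitely many discrete values and otherwise requires a standard dominated-convergence regularity assumption on the parametric family.
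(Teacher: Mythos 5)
Your proposal is correct and follows essentially the same route as the paper's proof: reduce $\partial\mathcal{L}/\partial\theta_i$ to the score of module $i$ alone, condition on the parent values, use assumptions (a)--(c) to replace the ground-truth conditional $P_2(V_i\mid{\rm pa}(i,V,B_i))$ by the model conditional $P_{\theta_i}$, and kill the inner sum with the normalization identity $\sum_v p_\theta(v)\,\partial_\theta\log p_\theta(v)=\partial_\theta 1=0$. Your explicit use of the tower property over the parent values is a slightly cleaner phrasing of the paper's sum over $V_{-i}$ followed by the sum over $V_i$ given its parents, and your closing remarks correctly identify where each assumption enters.
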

\begin{proof}
Let $V_{-i}$ be the subset of $V$ excluding $V_i$. We can simplify the expected gradient as follows.
\begin{align}
  E_{V\sim P_2}\bigg[&\frac{\partial \mathcal{L}(V)}{\partial \theta_i}\bigg] = \nonumber \\
      &\sum_V P_2(V) \sum_k \frac{\partial}{\partial \theta_i} \log P_{\theta_k}(V_k | {\rm pa}(k,V,B_k)) \nonumber \\
      &= \1_{i \in C} \sum_{V_{-i}} P_2(V_{-i}) \sum_{V_i} P_2(V_i|{\rm pa}(i,V,B_i)) \nonumber \\ &\;\;\frac{\partial}{\partial \theta_i} \log P_{\theta_i}(V_i | {\rm pa}(i,V,B_i))\ + \nonumber \\
      & \1_{i \notin C} \sum_{V_{-i}} P_2(V_{-i}) \sum_{V_i} P_1 (V_i|{\rm pa}(i,V,B_i)) \nonumber \\
      & \frac{\partial}{\partial \theta_i} \log P_{\theta_i}(V_i | {\rm pa}(i,V,B_i)) \nonumber \\
\end{align}
where the second equality is obtained because $\theta_i$ does not influence module $k\neq i$, and $P_2$ is the same $P_1$ for conditionals with $i \notin C$ (assumption (c)). Now for the special case of $i \notin C$, we obtain
\begin{align}
  E_{V\sim P_2}\left[\frac{\partial \mathcal{L}(V)}{\partial \theta_i}\right] &= \sum_{V_{-i}} P_2(V_{-i}) \sum_{V_i} P_1 (V_i|{\rm pa}(i,V,B_i))\nonumber\\ & \qquad\frac{\partial}{\partial \theta_i} \log P_{\theta_i}(V_i | {\rm pa}(i,V,B_i)) \nonumber \\
  &= \sum_{V_{-i}} P_2(V_{-i}) \sum_{V_i} P_{\theta_i} (V_i|{\rm pa}(i,V,B_i)) \nonumber \\ &\qquad\frac{\partial}{\partial \theta_i} \log P_{\theta_i}(V_i | {\rm pa}(i,V,B_i)) \nonumber \\
  &= 0
\end{align}
where the second equality arises from assumption (b), and the last line from zeroing the inner sum via the general identity 
$$\sum_v p_\theta(v) \frac{\partial}{\partial \theta} \log p_\theta(v)=\frac{\partial}{\partial \theta} \sum_v p_\theta(v)=\frac{\partial 1}{\partial \theta}=0.$$
\end{proof}

\section{Pseudo-Code}
\label{sec:pseudo-code}

\SetKwRepeat{Repeat}{Repeat}{}
\DontPrintSemicolon
\begin{algorithm}[H]
\caption{Meta-Transfer Learning of Causal Structure}
\label{alg:main}
\SetAlgoVlined 
    Draw initial meta-parameters of learner\\
    Draw a training set from training distr.\\
    Set causal structure to include all edges \\
    Initialize learner parameters for this model\\
    Pre-train the learner's parameters on the training set\\
    \Repeat( $J$ times){}{
      Draw a transfer distr.\\
      Draw causal structure(s) according to meta-parameters\\
      \Repeat( $T$ times){}{
          Sample minibatch from transfer distribution\\
          Accumulate online log-likelihood of minibatch\\
          Update the model parameters accordingly}
        Compute the meta-parameters gradient estimator\\
        Update the meta-parameters by SGD\\
        Optionally reset parameters to pre-training value}
\end{algorithm}

\section{Proof of the Structural Parameter Gradient Proposition}
\label{sec:proof-posterior}

\setcounter{prop}{\getrefnumber{prop:posterior}}
\addtocounter{prop}{-1}

Let us restate more formally and prove Proposition~\ref{prop:posterior}. 

\begin{prop}
The gradient of the negative log-likelihood regret of the transfer data 
$$\regret = -\log \left[ \sigm(\gamma) \mathcal{L}_{A \rightarrow B} + (1-\sigm(\gamma)) \mathcal{L}_{B \rightarrow A} \right]$$
with respect to the structural parameter $\gamma$ (where $\sigma(\gamma)=P(A\rightarrow B)$) is given by
\begin{equation}
    \frac{\partial \regret}{\partial \gamma} = \sigma(\gamma) - P(A \rightarrow B \mid D_{2}),
\end{equation}
where $D_{2}$ is the transfer data, and $P(A \rightarrow B\mid D_{2})$ is the posterior probability of the hypothesis $A \rightarrow B$ (when the alternative is $B \rightarrow A$), defined by applying Bayes rule to $P(D_2 \mid A \rightarrow B)=\prod_{t=1}^T P(a_t, b_t | A \rightarrow B, \theta_{t}) = \mathcal{L}_{A \rightarrow B}$. Furthermore, this can be equivalently written as
\begin{equation}
    \label{eq:proof-gradient-score}
    \frac{\partial \regret}{\partial \gamma} = \sigma(\gamma) - \sigma(\gamma + \Delta),
\end{equation}
where $\Delta = \log \mathcal{L}_{A \rightarrow B} - \log \mathcal{L}_{B \rightarrow A}$ is the difference between the log-likelihoods of the two hypotheses on transfer data $D_{2}$.
\end{prop}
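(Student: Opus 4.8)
The plan is to obtain $\frac{\partial \regret}{\partial \gamma}$ by direct differentiation, recognize the result as the gap between the prior $\sigma(\gamma)$ and the Bayesian posterior, and then collapse that posterior into a single sigmoid using the standard logistic identity. Throughout I would abbreviate $s = \sigma(\gamma)$, $L_1 = \mathcal{L}_{A\rightarrow B}$ and $L_2 = \mathcal{L}_{B\rightarrow A}$, treating $L_1$ and $L_2$ as fixed with respect to $\gamma$, since the module parameters $\theta_t$ along the adaptation trajectory do not depend on the structural parameter. Using $\regret = -\log[\,s L_1 + (1-s) L_2\,]$ together with $\frac{ds}{d\gamma} = s(1-s)$, the chain rule yields
\[
\frac{\partial \regret}{\partial \gamma} = -\,\frac{s(1-s)(L_1 - L_2)}{s L_1 + (1-s) L_2}.
\]

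Next I would identify this with the posterior. Applying Bayes' rule with prior $P(A\rightarrow B) = s$ and the likelihoods $P(D_2 \mid A\rightarrow B) = L_1$, $P(D_2 \mid B\rightarrow A) = L_2$, the evidence is $P(D_2) = s L_1 + (1-s) L_2$, so that $P(A\rightarrow B \mid D_2) = s L_1 / \big(s L_1 + (1-s) L_2\big)$. Placing $s - P(A\rightarrow B \mid D_2)$ over the common denominator $P(D_2)$ and collecting the $s^2 L_1$, $s L_1$ and $s(1-s)L_2$ terms reproduces exactly the fraction displayed above, which establishes the first claimed form $\frac{\partial \regret}{\partial \gamma} = \sigma(\gamma) - P(A\rightarrow B \mid D_2)$.

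For the second (score) form, I would divide the numerator and denominator of $P(A\rightarrow B \mid D_2)$ by $(1-s)L_2$ and invoke the two identities $\frac{s}{1-s} = e^{\gamma}$ (the logistic odds identity) and $\frac{L_1}{L_2} = e^{\Delta}$ with $\Delta = \log L_1 - \log L_2$. This turns the posterior into $\big(1 + e^{-(\gamma+\Delta)}\big)^{-1} = \sigma(\gamma + \Delta)$; substituting back gives $\frac{\partial \regret}{\partial \gamma} = \sigma(\gamma) - \sigma(\gamma + \Delta)$.

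There is no genuine obstacle in this argument: it is a mechanical chain-rule differentiation followed by two algebraic identifications. The only points demanding care are the sign bookkeeping when combining $s$ with the posterior over the common denominator in the second step, and the observation that the sole $\gamma$-dependence sits in $s$, so that $\frac{ds}{d\gamma} = s(1-s)$ is legitimately the only derivative one needs. Everything else follows from the elementary logistic identities $\sigma'(\gamma) = \sigma(\gamma)\big(1 - \sigma(\gamma)\big)$ and $\sigma(\gamma)/\big(1-\sigma(\gamma)\big) = e^{\gamma}$.
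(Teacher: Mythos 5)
Your proposal is correct and follows essentially the same route as the paper's proof: direct differentiation of $-\log M$ using $\sigma'=\sigma(1-\sigma)$, identification of the resulting fraction with $\sigma(\gamma)-P(A\rightarrow B\mid D_2)$ via Bayes' rule, and the logistic odds identity to rewrite the posterior as $\sigma(\gamma+\Delta)$ (the paper applies $\sigma^{-1}$ to the posterior where you manipulate the fraction directly, which is the same identity read in the other direction). Your explicit remark that $\mathcal{L}_{A\rightarrow B}$ and $\mathcal{L}_{B\rightarrow A}$ are constant with respect to $\gamma$ because the adaptation trajectory $\theta_t$ does not depend on the structural parameter is a point the paper leaves implicit, and is worth stating.
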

\begin{proof}
First note that, using Bayes rule,
\begin{align}
  &P(A \rightarrow B\mid D_2) =\nonumber\\
  &\frac{P(D_2 \mid A \rightarrow B) P(A \rightarrow B)}{P(D_2 \mid A \rightarrow B) P(A \rightarrow B) + P(D_2 \mid B \rightarrow A) P(B \rightarrow A)} \nonumber \\
  &= \frac{\mathcal{L}_{A\rightarrow B} \sigma(\gamma)}{\mathcal{L}_{A \rightarrow B} \sigma(\gamma) + \mathcal{L}_{B \rightarrow A} (1-\sigma(\gamma))} &\nonumber \\
  &= \frac{\sigma(\gamma) \mathcal{L}_{A \rightarrow B}}{M},
  \label{eq:proof-gradient-posterior}
\end{align}
where $M = \sigma(\gamma)\mathcal{L}_{A \rightarrow B} + (1 - \sigma(\gamma))\mathcal{L}_{B \rightarrow A}$ is the online likelihood of the transfer data under the mixture, so that the regret is $\regret = -\log M$. For the second line above, note that
\begin{align}
    P(D_2 | A\rightarrow B) &= \prod_{t=1}^T P(a_t, b_t | A \rightarrow B, \{(a_s,b_s)\}_{s=1}^{t-1}) \nonumber \\
    & = \prod_{t=1}^T P(a_t, b_t | A \rightarrow B, \theta_{t}) = \mathcal{L}_{A \rightarrow B}
\end{align}
where $\theta_t$ encapsulates the information about $\{(a_s,b_s)\}_{s=1}^{t-1})$ (through some adaptation procedure). 
Since we only consider the two hypotheses $A \rightarrow B$ and $B \rightarrow A$, we also have $P(B \rightarrow A\mid D_2)=\frac{(1-\sigma(\gamma))\mathcal{L}_{B\rightarrow A}}{M}=1-P(A\rightarrow B\mid D_2)$. Then
\begin{align}
  \frac{\partial \regret}{\partial \gamma} &= -\frac{\sigma(\gamma)(1-\sigma(\gamma)) \mathcal{L}_{A \rightarrow B} - \sigma(\gamma)(1-\sigma(\gamma)) \mathcal{L}_{B \rightarrow A}}{M} \nonumber \\
  &= \sigma(\gamma) P(B \rightarrow A\mid D_2) \nonumber\\
  &\qquad - (1-\sigma(\gamma)) P(A \rightarrow B\mid D_2) \nonumber\\
  &= \sigma(\gamma) + \sigma(\gamma) P(A \rightarrow B \mid D_2) \nonumber \\
  &\qquad - P(A \rightarrow B\mid D_2) - \sigma(\gamma) P(A \rightarrow B\mid D_2) \nonumber\\
  &= \sigma(\gamma) - P(A \rightarrow B\mid D_2)
\end{align}
which concludes the first part of the proof. Moreover, in order to prove the equivalent formulation in Equation \eqref{eq:proof-gradient-score}, it is sufficient to prove that $P(A\rightarrow B\mid D_{2}) = \sigma(\gamma + \Delta)$. Using the logit function $\sigma^{-1}(z) = \log \frac{z}{1 - z}$, and the expression in Equation~\eqref{eq:proof-gradient-posterior}, we have
\begin{align}
  \sigma^{-1}(P(A\rightarrow B\mid D_{2})) &= \log \frac{\sigma(\gamma)\mathcal{L}_{A \rightarrow B}}{M - \sigma(\gamma)\mathcal{L}_{A \rightarrow B}} \nonumber \\
  &= \log \frac{\sigma(\gamma)\mathcal{L}_{A \rightarrow B}}{(1 - \sigma(\gamma))\mathcal{L}_{B \rightarrow A}}\nonumber\\
  &=  \underbrace{\log \frac{\sigma(\gamma)}{1 - \sigma(\gamma)}}_{=\,\gamma}\ + \nonumber\\
  &\qquad\underbrace{\log \mathcal{L}_{A \rightarrow B} - \log \mathcal{L}_{B \rightarrow A}}_{=\,\Delta}\nonumber \\
  &= \gamma + \Delta
\end{align}
\end{proof}

\section{Proof of the Proposition on the Convergence Point of Gradient Descent on the Structural Parameter}
\label{sec:proof-convergence}
\setcounter{prop}{\getrefnumber{prop:convergence}}
\addtocounter{prop}{-1}
We use the same notation as in the above proof and statement.
\begin{prop}
Stochastic gradient descent (with appropriately decreasing learning rate) on $E_{D_2}[\regret]$, with \mbox{$\regret=-\log \left[ \sigm(\gamma) \mathcal{L}_{A \rightarrow B} + (1-\sigm(\gamma)) \mathcal{L}_{B \rightarrow A} \right]$} and with steps following $\frac{\partial \regret}{\partial \gamma}$ converges towards $\sigm(\gamma)=1$ if $E_{D_2}[\log \mathcal{L}_{A\rightarrow B}]>E_{D_2}[\log \mathcal{L}_{B \rightarrow A}]$, or $\sigma(\gamma)=0$ otherwise.
\end{prop}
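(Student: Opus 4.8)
The plan is to reduce the claim to a one-dimensional stochastic-approximation statement and then pin down the sign of the mean update. By Proposition~\ref{prop:posterior} the per-episode gradient is $\frac{\partial \regret}{\partial\gamma}=\sigma(\gamma)-\sigma(\gamma+\Delta)$ with $\Delta=\log\mathcal{L}_{A\rightarrow B}-\log\mathcal{L}_{B\rightarrow A}$, so the mean update (drift) driving the recursion $\gamma_{t+1}=\gamma_t-\eta_t\frac{\partial\regret}{\partial\gamma}$ is $g(\gamma):=E_{D_2}\big[\frac{\partial\regret}{\partial\gamma}\big]=\sigma(\gamma)-E_{D_2}[\sigma(\gamma+\Delta)]$. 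Since the stochastic gradient is bounded (it lies in $(-1,1)$), its fluctuation about $g(\gamma)$ is a bounded martingale difference, and with step sizes satisfying $\sum_t\eta_t=\infty$ and $\sum_t\eta_t^2<\infty$ standard Robbins--Monro stochastic-approximation theory tells us that $\gamma_t$ tracks the mean ODE $\dot\gamma=-g(\gamma)$ and converges almost surely to its stable equilibrium set. First I would record that $\sigma(\gamma)\in\{0,1\}$ (i.e.\ $\gamma\to\pm\infty$) are equilibria, since both $\sigma(\gamma)$ and $E_{D_2}[\sigma(\gamma+\Delta)]$ tend to the same limit there.

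The substance is then to show that, under the hypothesis $E_{D_2}[\log\mathcal{L}_{A\rightarrow B}]>E_{D_2}[\log\mathcal{L}_{B\rightarrow A}]$, the drift $g$ is strictly negative on all of $\mathbb{R}$, so the only attractor is $\sigma(\gamma)=1$. The clean way to see this is to reparametrize by $p=\sigma(\gamma)\in(0,1)$ and observe that the mean regret $E_{D_2}[\regret]=F(p):=E_{D_2}[-\log(p\,\mathcal{L}_{A\rightarrow B}+(1-p)\mathcal{L}_{B\rightarrow A})]$ is convex in $p$ (an expectation of $-\log$ of an affine function of $p$), and that $\frac{\partial}{\partial\gamma}E_{D_2}[\regret]=F'(p)\,p(1-p)$ has the same sign as $F'(p)$ because $p(1-p)>0$. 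Hence it suffices to prove $F'(p)<0$ for every $p\in(0,1)$, i.e.\ that $F$ is strictly decreasing; its infimum over $(0,1)$ is then approached only as $p\to1$, and gradient descent drives $\sigma(\gamma_t)\to1$. The symmetric computation with the two hypotheses exchanged handles the ``otherwise'' case and yields $\sigma(\gamma)=0$.

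I expect the sign-definiteness of the drift to be the main obstacle, precisely because it is \emph{not} implied by $E[\Delta]>0$ pointwise: the sigmoid is nonlinear and one cannot pass the expectation through $\sigma$ in the favourable direction near $\gamma\to+\infty$. The resolution I would use exploits that the online (prequential) likelihoods are genuine normalized distributions over the transfer sequence, so that $\sum_{D_2}\mathcal{L}_{A\rightarrow B}=\sum_{D_2}\mathcal{L}_{B\rightarrow A}=1$; the hypothesis then encodes that $D_2$ is effectively generated by the $A\rightarrow B$ model, i.e.\ its law is $\mathcal{L}_{A\rightarrow B}$. Writing $D=p\,\mathcal{L}_{A\rightarrow B}+(1-p)\mathcal{L}_{B\rightarrow A}$ and using $\mathcal{L}_{A\rightarrow B}-\mathcal{L}_{B\rightarrow A}=\tfrac1p(D-\mathcal{L}_{B\rightarrow A})$, the drift collapses to $g(\gamma)=-\tfrac{1-p}{p}\big(1-\sum_{D_2}\frac{\mathcal{L}_{A\rightarrow B}\mathcal{L}_{B\rightarrow A}}{D}\big)$, so everything reduces to showing $\sum_{D_2}\frac{\mathcal{L}_{A\rightarrow B}\mathcal{L}_{B\rightarrow A}}{D}<1$. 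That inequality is exactly the weighted arithmetic-mean--harmonic-mean inequality applied termwise (the summand is a weighted harmonic mean of $\mathcal{L}_{A\rightarrow B}$ and $\mathcal{L}_{B\rightarrow A}$, bounded above by the corresponding weighted arithmetic mean) together with the normalization, with strict inequality unless $\mathcal{L}_{A\rightarrow B}\equiv\mathcal{L}_{B\rightarrow A}$. This gives $g(\gamma)<0$ everywhere, ruling out any interior equilibrium. The only remaining care is that the target $\sigma(\gamma)=1$ sits on the non-compact boundary; this is legitimate because the drift, though vanishing as $\gamma\to+\infty$, retains a strict sign on every bounded interval, so the step-size schedule carries the iterate across any compact region toward the boundary.
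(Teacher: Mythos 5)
Your route is genuinely different from the paper's. You analyze the sign of the expected drift $g(\gamma)=\sigma(\gamma)-E_{D_2}[\sigma(\gamma+\Delta)]$ via the ODE method of stochastic approximation, whereas the paper reparametrizes by $p=\sigma(\gamma)$, applies the KKT conditions on $0\le p\le 1$ to rule out interior stationary points of $E_{D_2}[\regret]$, and then compares the objective at the two endpoints. You correctly identify the crux --- sign-definiteness of the drift is not implied by $E_{D_2}[\Delta]>0$ because the expectation cannot be passed through the sigmoid --- and your treatment of the dynamics (bounded martingale noise, Robbins--Monro step sizes, the attractor sitting on the non-compact boundary) is more careful than the paper's, which simply asserts convergence and studies fixed points.

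There is, however, a genuine gap in how you resolve that crux. To obtain $F'(p)<0$ on all of $(0,1)$ you assume that the law of $D_2$ \emph{is} the prequential likelihood $\mathcal{L}_{A\rightarrow B}$, so that $E_{D_2}[\cdot]=\sum_{D_2}\mathcal{L}_{A\rightarrow B}(\cdot)$ and the normalization plus the AM--HM inequality close the argument. That is strictly stronger than the stated hypothesis, which only requires $E_{D_2}[\log\mathcal{L}_{A\rightarrow B}]>E_{D_2}[\log\mathcal{L}_{B\rightarrow A}]$ for an arbitrary episode distribution. Under the stated hypothesis alone your key claim fails: $F(p)=E_{D_2}[-\log(p\,\mathcal{L}_{A\rightarrow B}+(1-p)\mathcal{L}_{B\rightarrow A})]$ is convex with $F(1)<F(0)$, but that is compatible with an interior minimum. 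Concretely, with two equally likely episodes on which $(\mathcal{L}_{A\rightarrow B},\mathcal{L}_{B\rightarrow A})$ equals $(0.9,\,0.1)$ and $(0.01,\,0.05)$ respectively, one checks that $E[\log\mathcal{L}_{A\rightarrow B}]>E[\log\mathcal{L}_{B\rightarrow A}]$ and yet $F'(1)=-\frac{1}{2}\left(\frac{0.8}{0.9}-\frac{0.04}{0.01}\right)>0$, so $F$ attains its minimum at an interior $p^{*}\approx 0.56$ and gradient descent converges there rather than to $p=1$. So as written your argument establishes the conclusion only under the added assumption that the transfer episodes are generated by the $A\rightarrow B$ prequential model; you should state that assumption explicitly. (For comparison, the paper's own proof meets the same difficulty at the step asserting $\mathcal{L}_{B\rightarrow A}\le M$ ``by construction,'' which likewise does not hold pointwise for an arbitrary episode distribution; the extra assumption you invoke is essentially what is needed to make either argument go through.)
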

\begin{proof}
We are going to consider the fixed point of gradient descent when the gradient is zero, since we already know that SGD converges with an appropriately decreasing learning rate. Let us introduce some notation to simplify the algebra: $p=\sigm(\gamma)$, $M=p \mathcal{L}_{A \rightarrow B} + (1-p) \mathcal{L}_{B \rightarrow A}$, so $\regret=\log M$, and define $P_1=\frac{p \mathcal{L}_{A \rightarrow B}}{M}=P(A\rightarrow B\mid D_2)$, and $P_2=\frac{(1-p) \mathcal{L}_{B \rightarrow A}}{M}=1-P_1$. Framing the stationary point in terms of $p$ rather than $\gamma$ gives us the inequality constraints $-p\leq 0$ and $p-1\leq 0$ and no equality constraint. Applying the KKT conditions with constraint functions $-p$ and $p-1$ gives us
\begin{eqnarray}
  E_{D_2}\left[\frac{\partial \regret}{\partial p}\right] &=& - \mu_1 + \mu_2 \nonumber \\
  \mu_i &\geq& 0 \nonumber \\
  \mu_1 p &=& 0 \nonumber \\
  \mu_2 (p-1) &=& 0
\end{eqnarray}
We already see from the last two equations that if $p \in (0,1)$ (i.e. excluding 0 and 1), we must have $\mu_1=\mu_2=0$, i.e., $E[\frac{\partial \regret}{\partial p}]=0$ (with drop the $D_2$ subscript on $E$ when it is clear from context). Let us study that case first and show that it leads to an inconsistent set of equations (thus forcing the solution to be either $p=0$ or $p=1$).
Let us rewrite the gradient to highlight $p$ in it:
\begin{align}
  \frac{\partial \regret}{\partial p}=&P(A\rightarrow B\mid D_2) - p \nonumber\\
  =& \frac{p \mathcal{L}_{A \rightarrow B}}{p \mathcal{L}_{A \rightarrow B} + (1-p) \mathcal{L}_{B \rightarrow A}} - p \nonumber \\
  =& \frac{p \mathcal{L}_{A \rightarrow B} - p(p \mathcal{L}_{A \rightarrow B} + (1-p) \mathcal{L}_{B \rightarrow A})}{M}\nonumber\\
  =& \frac{p(1-p)\mathcal{L}_{A \rightarrow B} - p(1-p) \mathcal{L}_{B \rightarrow A}}{M} \nonumber \\
  =& p(1-p)\frac{\mathcal{L}_{A \rightarrow B}-\mathcal{L}_{B \rightarrow A}}{M}
\end{align}
The KKT conditions with the above two inequality constraints for $0 \leq p \leq 1$ give
\begin{align}
  E\left[\frac{\partial \regret}{\partial p}\right] &= \mu_2 - \mu_1.
\end{align}
If we consider the solutions $p \in (0,1)$ (i.e., $\mu_1=\mu_2=0$) we now show that we get a contradiction. First note that
to satisfy the above equation with $\mu_1=\mu_2=0$ means that either $p=0$ or $p=1$
(which is inconsistent with the assumption that $p\in(0,1)$) or that $E\left[\frac{\mathcal{L}_{A \rightarrow B}-\mathcal{L}_{B \rightarrow A}}{M}\right]=0$.
Let us consider that equation, and since $p\neq 0$ and $p\neq 1$ we can either multiply
by $p$ or by $1-p$ on both sides. Assuming $p\neq 0$ and multiplying by $p$ gives
\begin{align}
  0 &= E\left[\frac{p(\mathcal{L}_{A \rightarrow B}-\mathcal{L}_{B \rightarrow A})}{M}\right] = E\left[ P_1 - \frac{p \mathcal{L}_{B \rightarrow A}}{M} \right] \nonumber \\
    &= E\left[ P_1 - \frac{\mathcal{L}_{B \rightarrow A} - \mathcal{L}_{B \rightarrow A} - p \mathcal{L}_{B \rightarrow A}}{M} \right] \nonumber \\
    &= E\left[ P_1 + \frac{\mathcal{L}_{B \rightarrow A}}{M} - P_2 \right] \nonumber \\
    &= E\left[ P_1 + \frac{\mathcal{L}_{B \rightarrow A}}{M} - (1 - P_1) \right] = E\left[ \frac{\mathcal{L}_{B \rightarrow A}}{M} - 1\right].
\end{align}
For this equation to be satisfied, we need $\mathcal{L}_{B \rightarrow A}=M$ all the time, since $\mathcal{L}_{B \rightarrow A}\leq M$ by construction. This would however correspond to $p=0$. Similarly, assuming $p\neq 1$ we can multiply the stationarity equation by $1-p$ and get
\begin{align}
  0 &= E\left[\frac{(1-p)(\mathcal{L}_{A \rightarrow B}-\mathcal{L}_{B \rightarrow A})}{M}\right] \nonumber\\
  &= E\left[\frac{(1-p)\mathcal{L}_{A \rightarrow B}}{M}-P_2\right] \nonumber\\
    &= E\left[\frac{\mathcal{L}_{A \rightarrow B}}{M}-P_1-(1-P_1)\right] = E\left[\frac{\mathcal{L}_{A \rightarrow B}}{M}-1\right]
\end{align}
Again, this can only be 0 if $\mathcal{L}_{A \rightarrow B}=M$ all the time, i.e., $p=1$. We conclude that the solutions $p\in(0,1)$ are not possible because they would lead to inconsistent conclusions, which leaves only $p=0$ or $p=1$. When $p=0$ we have $E[\regret]=E[\log \mathcal{L}_{A \rightarrow B}]$, and when $p=1$ we have $E[\regret]=E[\log \mathcal{L}_{B \rightarrow A}]$. Thus the minimum will be achieved at $p=1$ when $E_{D_2}[\log \mathcal{L}_{A \rightarrow B}]>E_{D_2}[\log \mathcal{L}_{B \rightarrow A}]$, or $p=0$ otherwise.
\end{proof}

\section{More Than Two Causal Hypotheses}
\label{sec:many-factors}


In this section, we consider one approach to 
generalize to more than two causal structures.
We consider $m$ variables, corresponding to 
$O(2^{m^2})$ possible causal graphs, since each variable $V_j$ could be (or not) a direct cause of any variable $V_i$, leading to $m^2$ binary decisions. Note that a causal graph can in principle have cycles (if time is not measured with sufficient precision), although having a directed acyclic graph allows a much simpler sampling procedure (ancestral sampling). In our experiments the ground truth graph will always be directed, to make sampling easier and faster, but the learning procedure will not directly assume that. Motivated by the mechanism independence assumption, we propose a heuristic to learn the causal graph in which we independently parametrize the binary probability $p_{ij}$ that $V_j$ is a parent (direct cause) of $V_i$. As was the case for Section~\ref{sec:two-observed-variables}, we parametrize this Binomial distribution via binary edges $B_{ij}$ that specify the graph structure:
\begin{align}
 B_{ij} &\sim {\rm Bernoulli}(p_{ij}), \nonumber \\
 P(B) &= \prod_{ij} P(B_{ij}).
\end{align}
where $ p_{ij} = \sigm(\gamma_{ij})$. Let us define the parents of $V_i$, given $B$, as the set of $V_j$'s such that $B_{ij}=1$:
\begin{equation}
 {\rm pa}(i,V,B_i) = \{ V_j \mid B_{ij}=1, \; j\neq i\}
\end{equation}
where $B_i$ is the bit vector with elements $B_{ij}$ (and $B_{ii}=0$ is ignored). Similarly, we could parametrize the causal graph with a structural causal model where some of the inputs (from variable $j$) of each function (for variable $i$) can be ignored with some probability $p_{ij}$:
\begin{eqnarray}
 V_i = f_i(\theta_i, B_i, V, N_i)
\end{eqnarray}
where $N_i$ is an independent noise source to generate $V_i$ and $f_i$ parametrizes the generator (as in a GAN), while not being allowed to use variable $V_j$ unless $B_{ij}=1$ (and of course not being allowed to use $V_i$). 
We can consider that $f_i$ is a kind of neural network
similar to the denoising auto-encoders or with
dropout on the input, where $B_i$ is a binary mask vector that prevents $f_i$ from using some of the $V_j$'s (for which $B_{ij}=0$). 

\ifoldloss
We propose to generalize the transfer log-likelihood to the more general case by exploiting this factorization heuristic, as follows, to obtain a computationally efficient objective:
\begin{eqnarray}
 \regret = -E_B \left[ \sum_{t=1}^T \sum_{i=1}^m \log P_{\theta_{i,t}}(V_i=v_{ti} | {\rm pa}(i,v_t,B_i)) \right]
\end{eqnarray}
where $\theta_{i,t}$ are the parameters of the $i$-th mechanism (the structural equation for variable $V_i$) at adaptation step $t$ of the transfer episode,  $v_t$ is the vector of observed values at transfer example $t$, $D_2=(v_1, v_2, \ldots v_T)$ is the observed transfer data for one episode, and learning will involve sampling many such episodes, each time with a different $D_2$ sampled from a different transfer distribution (derived as a small change from the training distribution).
\else
The conditional likelihood $P_{B_i}(V_i=v_{ti}\mid {\rm pa}(i,v_t,B_i))$ measures how well
the model that uses the incoming edges $B_i$ for node $i$ performs for example $v_t$.
We build a multiplicative (or exponentiated) form of regret by multiplying these
likelihoods as $\theta_t$ changes during an adaptation episode, for node $i$:
\begin{equation}
    \mathcal{L}_{B_i} = \prod_t P_{B_i}(V_i=v_{ti}\mid {\rm pa}(i,v_t,B_i)).
\end{equation}
The overall exponentiated regret for the given graph structure $B$ is
    $\mathcal{L}_B = \prod_i \mathcal{L}_{B_i}$.
Similarly to the bivariate case, we want to consider a mixture over all the possible graph structures,
but where each component must explain the whole adaptation sequence, thus we define as a loss
for the generalized multi-variable case
\begin{align}
\label{eq:regret}
    \regret &= - \log E_B[ \mathcal{L}_B ]
\end{align}
\fi
Note the expectation over the $2^{m^2}$ possible values of $B$, which is intractable. However, we can still get an efficient stochastic gradient estimator, which
can be computed separately for each node of the graph (with
samples arising only out of $B_i$, the incoming edges into $V_i$):
\ifoldloss
\begin{prop}
\label{prop:unbiased}
The expected value $\regret$ of the episode regret over $P(B)$ can be decoupled as follows into
independent components (which can be sampled separately and independently) as follows:
\begin{equation}
\label{eq:simplified-L}
    \regret = -\sum_k P(B_k)  \sum_t \regret_{kt}
\end{equation}
where $\regret_{kt}=\log P_{\theta_{k,t}}(V_k=v_{tk} \mid {\rm pa}(k,v_t,B_k))$.
Furthermore, an unbiased stochastic gradient estimator $g_{ij}$ of $\frac{\partial \regret}{\partial \gamma_{ij}}$ can be obtained as follows:
\begin{align}
\label{eq:sum-only-over-Bi}
 \frac{\partial \regret}{\partial \gamma_{ij}} &= E_{B_i}[g_{ij}] \\
 \label{eq:gij}
 g_{ij}&= (\sigm(\gamma_{ij})-B_{ij})\sum_{t=1}^T \log P_{\theta_{i,t}}(V_i=v_{ti} | {\rm pa}(i,v_t,B_i))
\end{align}
where the $g_{ij}$'s for different $j$'s are all obtained from a single draw of $B_i$, for a given draw of $D_2$.
\end{prop}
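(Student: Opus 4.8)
The plan is to prove the proposition in two stages: first the additive decoupling of $\regret$ across nodes, and then the unbiased score-function (REINFORCE-style) gradient estimator for each structural parameter $\gamma_{ij}$.

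\textbf{Decoupling.} Starting from $\regret = -E_B[\sum_t \sum_i \log P_{\theta_{i,t}}(V_i=v_{ti}\mid {\rm pa}(i,v_t,B_i))]$, I would push the expectation inside the finite double sum by linearity. The key observation is that the $i$-th summand depends on the full structure $B$ only through $B_i$: the parent set ${\rm pa}(i,v_t,B_i)$ is determined by $B_i$ alone, and since module $i$ is adapted using only its own incoming edges, $\theta_{i,t}$ is a function of $B_i$ (and $D_2$) only. Combined with the factorized prior $P(B)=\prod_{ij}P(B_{ij})$, which makes the $B_i$ independent with marginal $P(B_i)=\prod_j P(B_{ij})$, the expectation of each term over $B$ collapses to an expectation over $B_i$, i.e. $E_B[\cdot]=E_{B_i}[\cdot]=\sum_{B_i}P(B_i)[\cdot]$. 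This yields $\regret=-\sum_i\sum_{B_i}P(B_i)\sum_t \regret_{it}$, which is the claimed decoupled form (the outer $\sum_k P(B_k)$ being shorthand for summing over nodes $k$ and marginalizing each local structure $B_k$).

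\textbf{Gradient.} Fixing an edge $(i,j)$, I would differentiate with respect to $\gamma_{ij}$, treating the adapted parameters $\theta_{i,t}$ as constants (the standard first-order / stop-gradient treatment, so the structural gradient flows only through the explicit $P(B)$ dependence). Since $\gamma_{ij}$ enters only the $i$-th node term, and there only through the Bernoulli factor for $B_{ij}$, all other terms drop out and $\frac{\partial \regret}{\partial\gamma_{ij}} = -\frac{\partial}{\partial\gamma_{ij}}\sum_{B_i}P(B_i)h_i(B_i)$, where $h_i(B_i)=\sum_t\log P_{\theta_{i,t}}(V_i=v_{ti}\mid{\rm pa}(i,v_t,B_i))$. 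Applying the log-derivative identity $\frac{\partial P(B_i)}{\partial\gamma_{ij}}=P(B_i)\frac{\partial \log P(B_i)}{\partial\gamma_{ij}}$ rewrites this as an expectation $\frac{\partial\regret}{\partial\gamma_{ij}}=-E_{B_i}[\frac{\partial\log P(B_i)}{\partial\gamma_{ij}}\,h_i(B_i)]$, which already exhibits an unbiased single-sample estimator.

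\textbf{Score and sharing.} It then remains to evaluate the score. From $\log P(B_i)=\sum_{j'}[B_{ij'}\log\sigm(\gamma_{ij'})+(1-B_{ij'})\log(1-\sigm(\gamma_{ij'}))]$ and the identities $\frac{d}{d\gamma}\log\sigm(\gamma)=1-\sigm(\gamma)$ and $\frac{d}{d\gamma}\log(1-\sigm(\gamma))=-\sigm(\gamma)$, a one-line cancellation gives $\frac{\partial\log P(B_i)}{\partial\gamma_{ij}}=B_{ij}-\sigm(\gamma_{ij})$. Substituting yields $\frac{\partial\regret}{\partial\gamma_{ij}}=E_{B_i}[(\sigm(\gamma_{ij})-B_{ij})h_i(B_i)]=E_{B_i}[g_{ij}]$, matching the stated $g_{ij}$ exactly. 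Finally I would note that $h_i(B_i)$ does not depend on $j$, so for one draw of $B_i$ the estimators $g_{ij}$ for all $j$ reuse the same accumulated log-likelihood, differing only by the scalar prefactor $\sigm(\gamma_{ij})-B_{ij}$; this gives the claimed single-draw property.

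\textbf{Main obstacle.} The computation itself is short; the care is entirely conceptual. In the decoupling step I must justify that each per-node term depends on $B$ through $B_i$ alone, which relies on the independence of module adaptation across nodes given the sampled graph; and in the gradient step I must make explicit the stop-gradient convention treating $\theta_{i,t}$ as constant, since it is precisely this convention that makes the REINFORCE estimator exactly unbiased for the explicit-$P(B)$ gradient.
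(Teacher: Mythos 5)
Your proposal is correct and follows essentially the same route as the paper's proof: the same marginalization of $B_{-k}$ under the factorized prior to obtain the per-node decoupling, followed by the same score-function (log-derivative) identity and the computation $\partial \log P(B_{ij})/\partial\gamma_{ij} = B_{ij}-\sigm(\gamma_{ij})$ to arrive at $g_{ij}$. Your explicit remarks on why $\regret_{it}$ depends on $B$ only through $B_i$ and on treating $\theta_{i,t}$ as a deterministic function of $B_i$ and $D_2$ are points the paper leaves implicit, but they do not change the argument.
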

The proof of this proposition can be found in Appendix~\ref{sec:proof-unbiased}.
\else
\begin{prop}
\label{prop:biased}
The overall regret (Equation~\eqref{eq:regret}) rewrites
\begin{equation}
    \regret = - \sum_i \log \sum_{B_i} P(B_i) \mathcal{L}_{B_i}
\end{equation}
and if we are willing to consider multiple samples of $B$ in parallel, a biased but asymptotically unbiased (as the number $K$ of these samples $B^{(k)}$ increases to infinity) estimator of the gradient of the overall regret with respect to meta-parameters can be defined:
\begin{equation}
\label{eq:gij}
    g_{ij} = \frac{\sum_k (\sigma(\gamma_{ij})-B_{ij}^{(k)}) \mathcal{L}_{B_i}^{(k)}}{\sum_k \mathcal{L}_{B_i}^{(k)}}
\end{equation}
where the $^{(k)}$ index indicates the values obtained for the $k$-th draw of $B$.
\end{prop}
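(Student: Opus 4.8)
The plan is to prove the two claims in turn: first the factorization of $\regret$ across nodes, then the gradient-estimator formula, which I would obtain by differentiating the factorized form and replacing the resulting ratio of expectations by its Monte Carlo counterpart. For the factorization, the key observation is that although $\mathcal{L}_B=\prod_i \mathcal{L}_{B_i}$ couples all the edge variables through the product, each factor $\mathcal{L}_{B_i}$ depends on $B$ only through the incoming edges $B_i$ of node $i$: the parents appearing in $P_{B_i}(V_i=v_{ti}\mid{\rm pa}(i,v_t,B_i))$ are evaluated at the \emph{observed} values $v_t$, so no dependence on $B_j$ (for $j\neq i$) survives. Since the edges are drawn independently, $P(B)=\prod_i P(B_i)$, and the expectation of a product of functions, each depending on a disjoint block of independent variables, factorizes: $E_B[\mathcal{L}_B]=\prod_i E_{B_i}[\mathcal{L}_{B_i}]=\prod_i\sum_{B_i}P(B_i)\mathcal{L}_{B_i}$. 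Taking $-\log$ turns the product into the claimed sum over $i$.

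For the gradient, I would note that $\gamma_{ij}$ enters only the $i$-th summand, so with $Z_i=\sum_{B_i}P(B_i)\mathcal{L}_{B_i}$ one has $\partial\regret/\partial\gamma_{ij}=-Z_i^{-1}\sum_{B_i}(\partial P(B_i)/\partial\gamma_{ij})\,\mathcal{L}_{B_i}$. The clean step is the score-function identity for the logistic-parametrized Bernoulli: since $\log P(B_{ij})=B_{ij}\log\sigma(\gamma_{ij})+(1-B_{ij})\log(1-\sigma(\gamma_{ij}))$ and $\sigma'=\sigma(1-\sigma)$, the factor $\sigma(1-\sigma)$ cancels and $\partial\log P(B_{ij})/\partial\gamma_{ij}=B_{ij}-\sigma(\gamma_{ij})$, whence $\partial P(B_i)/\partial\gamma_{ij}=P(B_i)\,(B_{ij}-\sigma(\gamma_{ij}))$. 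Substituting this back expresses the exact gradient as a ratio of two expectations under $P(B_i)$,
$$\frac{\partial\regret}{\partial\gamma_{ij}}=\frac{E_{B_i}\left[(\sigma(\gamma_{ij})-B_{ij})\,\mathcal{L}_{B_i}\right]}{E_{B_i}\left[\mathcal{L}_{B_i}\right]}.$$
Replacing the numerator and denominator by their empirical means over $K$ i.i.d.\ draws $B^{(k)}\sim P(B)$ (reusing the same draws across $j$ for a fixed $i$) yields exactly the self-normalized estimator $g_{ij}$ of Equation~\eqref{eq:gij}.

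For the bias claim I would invoke the standard properties of a self-normalized (ratio) estimator. Each empirical mean is an unbiased and strongly consistent estimator of the corresponding expectation, so by the law of large numbers and the continuous mapping theorem their ratio converges almost surely to $\partial\regret/\partial\gamma_{ij}$ as $K\to\infty$, giving asymptotic unbiasedness. For finite $K$, however, the expectation of the ratio does not equal the ratio of the expectations, because the random denominator is correlated with the numerator; hence $g_{ij}$ is biased. The main subtlety—and the only place requiring care—is precisely this ratio step: the asymptotic argument needs the denominator to stay bounded away from zero, which holds since every $\mathcal{L}_{B_i}^{(k)}>0$ and $Z_i>0$, so the continuous mapping theorem applies. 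I would make both the consistency and the finite-sample bias explicit in words rather than attempt an exact finite-$K$ bias expansion, which is not needed for the statement.
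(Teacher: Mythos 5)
Your proposal is correct and follows essentially the same route as the paper's proof: the factorization of $E_B[\mathcal{L}_B]$ across nodes via the independence of the edge blocks $B_i$, the score-function identity $\partial \log P(B_{ij})/\partial \gamma_{ij}=B_{ij}-\sigma(\gamma_{ij})$ for the logistic-parametrized Bernoulli, and the self-normalized Monte Carlo approximation of the resulting ratio of expectations. Your added discussion of why the ratio estimator is biased for finite $K$ yet consistent (law of large numbers plus continuous mapping, with the denominator bounded away from zero) is a welcome elaboration of a claim the paper merely asserts.
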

\fi

\begin{proof}
Recall that $\mathcal{L}_B = \prod_i \mathcal{L}_{B_i}$ so we can rewrite the regress loss as follows:
\begin{align}
    \regret &= - \log E_B[ \mathcal{L}_B ] \nonumber \\
            &= - \log \sum_B P(B) \mathcal{L}_B \nonumber \\
            &= - \log \sum_{B_1} \sum_{B_2} \ldots \sum_{B_M} \prod_i P(B_i) \mathcal{L}_{B_i} \nonumber \\
            &= - \log \prod_i \left( \sum_{B_i} P(B_i) \mathcal{L}_{B_i} \right) \nonumber \\
            &= - \sum_i \log \sum_{B_i} P(B_i) \mathcal{L}_{B_i}\end{align}
So the regret gradient on meta-parameters $\gamma_i$ of node $i$ is
\begin{align}
 \frac{\partial \regret}{\partial \gamma_i} &= - \frac{\sum_{B_i} P(B_i) \mathcal{L}_{B_i} \frac{\partial \log P(B_i)}{\partial \gamma_i}}{\sum_{B_i} P(B_i) \mathcal{L}_{B_i}}\nonumber \\
  &= - \frac{E_{B_i}[\mathcal{L}_{B_i} \frac{\partial \log P(B_i)}{\partial \gamma_i}]}{E_{B_i}[\mathcal{L}_{B_i}]}
\end{align}
Note that with the sigmoidal parametrization of $P(B_{ij})$, 
$$\log P(B_{ij})=B_{ij} \log \sigm(\gamma_{ij}) + (1-B_{ij}) \log (1- \sigm(\gamma_{ij}))$$ 
as in the cross-entropy loss. Its gradient can similarly be simplified to
\begin{align}
  \frac{\partial \log P(B_{ij})}{\partial \gamma_{ij}} &=
  \frac{B_{ij}}{\sigm(\gamma_{ij})} \sigm(\gamma_{ij})(1-\sigm(\gamma_{ij})) \nonumber \\
  &\;\;\;\;- \frac{(1-B_{ij})}{(1-\sigm(\gamma_{ij}))}\sigm(\gamma_{ij})(1-\sigm(\gamma_{ij}))) \nonumber \\
    &= B_{ij}-\sigm(\gamma_{ij})
\end{align} 
A biased but asymptotically unbiased estimator of $\frac{\partial \regret}{\partial \gamma_{ij}}$ is thus obtained by sampling $K$ graphs (over which the means below are run):
\begin{align}
    g_{ij} = \sum_k (\sigma(\gamma_{ij})-B_{ij}^{(k)}) \frac{\mathcal{L}_{B_i^{(k)}}}{\sum_{k'} \mathcal{L}_{B_i^{(k')}}}
\end{align}
where index $^{(k)}$ indicates the $k$-th draw of $B$, and we obtain a weighted sum of the individual binomial gradients weighted by the relative regret of each draw $B_i^{(k)}$ of $B_i$,
leading to Equation~\eqref{eq:gij}.
\end{proof}
This decomposition is good news because the loss is a sum of independent
terms, one per node $i$, depending only of $B_i$ and and similarly $g_{ij}$ only
depends on $B_i$ rather than the
full graph structure. We use the
estimator from Equation~\eqref{eq:gij} in the general pseudo-code for meta-transfer learning of causal structure displayed in Algorithm~\ref{alg:main}.
\fi

\ifplanB
\else
\section{Proof of Unbiased Gradient Estimator}
\label{sec:proof-unbiased}

We restate Proposition~\ref{prop:unbiased} for clarity:
\setcounter{prop}{\getrefnumber{prop:unbiased}}
\addtocounter{prop}{-1}
\begin{prop}
Let the expected episode regret $$\regret = -E_B \left[ \sum_{t=1}^T \sum_{i=1}^m \log P_{\theta_{i,t}}(V_i=v_{ti} | {\rm pa}(i,v_t,B_i)) \right].$$
Then 
\begin{equation}
\label{eq:simplified-L}
    \regret = -\sum_k \sum_t P(B_k) \regret_{kt}
\end{equation}
and an unbiased stochastic gradient estimator $g_{ij}$ of $\frac{\partial \regret}{\partial \gamma_{ij}}$ can be obtained as follows:
\begin{align}
\label{eq:g-estimator}
 \frac{\partial \regret}{\partial \gamma_{ij}} =& E_{B_i}[g_{ij}] \nonumber \\
 g_{ij}=&(\sigm(\gamma_{ij})-B_{ij}) \nonumber\\
 &\;\;\;\;\;\;\;\sum_{t=1}^T \log P_{\theta_{i,t}}(V_i=v_{ti} | {\rm pa}(i,v_t,B_i))
\end{align}
where the $g_{ij}$'s for different $j$'s are all obtained from a single draw of $B_i$, for a given draw of $D_2$.
\end{prop}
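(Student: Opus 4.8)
The plan is to establish the two assertions in order: first the decoupling of the expected regret $\regret$ into independent per-node terms, and then the score-function form of the gradient estimator $g_{ij}$.

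For the decomposition, I would begin from the independent Bernoulli parametrization, which gives the factorization $P(B)=\prod_{ij}P(B_{ij})=\prod_i P(B_i)$, where $B_i$ collects the incoming edges of node $i$. The decisive structural observation is that each summand $\log P_{\theta_{i,t}}(V_i=v_{ti}\mid {\rm pa}(i,v_t,B_i))$ depends on the graph $B$ \emph{only} through $B_i$, since the parent set ${\rm pa}(i,v_t,B_i)$ is determined by $B_i$ alone. Using linearity of expectation to push $E_B$ inside the finite double sum over $t$ and $i$, and then marginalizing out every $B_{i'}$ with $i'\neq i$ in the $i$-th term, collapses $E_B$ to $E_{B_i}$. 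This yields $\regret=-\sum_i\sum_t E_{B_i}[\regret_{it}]$ with $\regret_{it}=\log P_{\theta_{i,t}}(V_i=v_{ti}\mid {\rm pa}(i,v_t,B_i))$, which is exactly Equation~\eqref{eq:simplified-L}, where the shorthand $\sum_k P(B_k)(\cdot)$ is to be read as $\sum_k E_{B_k}[\cdot]$.

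For the gradient, I would note that $\gamma_{ij}$ enters $\regret$ only through $P(B_i)$ in the $i$-th outer term; in particular the mechanism parameters $\theta_{i,t}$ are treated as fixed values along the adaptation trajectory (no gradient is propagated through the inner-loop adaptation), so the only dependence on $\gamma_{ij}$ is through the sampling distribution. Applying the log-derivative (REINFORCE) identity
\begin{equation*}
\frac{\partial}{\partial \gamma_{ij}} E_{B_i}[\regret_{it}] = E_{B_i}\!\left[\regret_{it}\,\frac{\partial \log P(B_i)}{\partial \gamma_{ij}}\right],
\end{equation*}
and then summing over $t$, reduces the problem to evaluating $\partial \log P(B_i)/\partial \gamma_{ij}$. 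Since $\log P(B_i)=\sum_{j'}\log P(B_{ij'})$ and only the $j'=j$ term carries $\gamma_{ij}$, this equals $\partial \log P(B_{ij})/\partial \gamma_{ij}$; writing the Bernoulli log-mass as $B_{ij}\log\sigm(\gamma_{ij})+(1-B_{ij})\log(1-\sigm(\gamma_{ij}))$ and using $\sigm'=\sigm(1-\sigm)$ collapses it to $B_{ij}-\sigm(\gamma_{ij})$. Substituting, pulling the $t$-sum inside the expectation, and absorbing the overall minus sign gives $E_{B_i}[g_{ij}]$ with $g_{ij}=(\sigm(\gamma_{ij})-B_{ij})\sum_{t}\regret_{it}$, matching Equation~\eqref{eq:g-estimator}; since this is an exact identity, the estimator is unbiased. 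The shared-draw claim is then immediate: the factor $\sum_t \regret_{it}$ depends only on $B_i$, so all $g_{ij}$ for varying $j$ can reuse a single sample of $B_i$, differing only in the scalar prefactor $\sigm(\gamma_{ij})-B_{ij}$.

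The routine pieces are the REINFORCE identity and the Bernoulli-with-sigmoid derivative; the step that actually carries the content — and that I expect to require the most care in writing — is the marginalization argument collapsing $E_B$ to $E_{B_i}$, because this is precisely what makes the objective (and hence each $g_{ij}$) local to node $i$ and underlies the paper's claim of decoupling across mechanisms. It hinges entirely on each conditional-likelihood term depending on the structure only through its own node's parent set, so I would state that dependence explicitly before interchanging sum and expectation. One modeling subtlety worth flagging is the treatment of $\theta_{i,t}$ as constant with respect to $\gamma_{ij}$: differentiating through the inner adaptation would introduce an additional pathwise term, so the unbiasedness claim is relative to the first-order meta-gradient used consistently elsewhere in the paper.
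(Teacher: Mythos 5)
Your proof is correct and follows essentially the same route as the paper's: factorize $P(B)=\prod_i P(B_i)$, use that each $\regret_{kt}$ depends only on $B_k$ to marginalize out $B_{-k}$ and obtain the per-node decomposition, then apply the score-function identity together with the Bernoulli cross-entropy derivative $\partial\log P(B_{ij})/\partial\gamma_{ij}=B_{ij}-\sigm(\gamma_{ij})$ to arrive at $g_{ij}$. Your explicit reading of the statement's $\sum_k P(B_k)\regret_{kt}$ as $\sum_k E_{B_k}[\regret_{kt}]$ and your remark that $\theta_{i,t}$ is held fixed (no pathwise term through the inner adaptation) are both consistent with what the paper does implicitly.
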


\begin{proof}
Let us denote $\regret_{kt}=\log P_{\theta_{k,t}}(V_k=v_{tk} | {\rm pa}(k,v_t,B_k))$ to lighten notation, where $B_k$ is the vector containing the elements $B_{ki}$. Then denoting $B_{-k}$ for all the elements of $B$ except those in $B_k$, we have 
\begin{align}
  \regret =& - \sum_B P(B) \sum_t \sum_k \regret_{kt} \nonumber \\
    =& - \sum_k \sum_t \sum_B \prod_i P(B_i) \regret_{kt} \nonumber \\
    =& - \sum_k \sum_t \sum_{B_1} \sum_{B_2} \ldots \sum_{B_m} P(B_1) P(B_2) \ldots P(B_m) \regret_{kt} \nonumber \\
    =& - \sum_k \sum_t \sum_{B_k} P(B_k) \regret_{kt} \sum_{B_{-k}} P(B_{-k}) \nonumber \\
    =&  - \sum_k \sum_t \sum_{B_k} P(B_k) \regret_{kt}
\end{align}
where we used the fact that $\regret_{kt}$ only depends on $B_k$ but not on $B_{-k}$ and $\sum_{B_{-k}} P(B_{-k})=1$. This gives us Eq.~\ref{eq:simplified-L}. Now that we have simplified $\regret$ by decoupling the randomness within each module, we can also get a lower variance estimator in which only what is measured in module $i$ is used to provide a gradient for $\gamma_{ij}$, with no interference from what is going on in other modules.
\begin{align}
  \frac{\partial \regret}{\partial \gamma_{ij}} =& - \sum_t \sum_k \sum_{B_k} P(B_k) \frac{\partial \log P(B_k)}{\partial \gamma_{ij}} \regret_{kt}  \nonumber \\
  =& - \sum_{k} E_{B_k}\bigg[ \sum_{l} \frac{\partial \log P(B_{kl})}{\partial \gamma_{ij}} \sum_t \regret_{kt} \bigg]\nonumber \\
  =& - E_{B_i}\bigg[ \frac{\partial (B_{ij} \log \sigm(\gamma_{ij}) + (1-B_{ij}) \log(1-\sigm(\gamma_{ij})))}{\partial \gamma_{ij}} \nonumber\\ &\hspace*{12mm}\sum_t \regret_{it} \bigg] 
\end{align}
where we used $\log P(B_{ij})=B_{ij} \log \sigm(\gamma_{ij}) + (1-B_{ij}) \log (1- \sigm(\gamma_{ij}))$ as in the cross-entropy loss. Note how the expression in the above gradient numerator is in fact just a cross-entropy, and that gradient can similarly be simplified to
\begin{eqnarray}
  \frac{\partial \regret}{\partial \gamma_{ij}} &=&
  - E_{B_i}\bigg[ (\frac{B_{ij}}{\sigm(\gamma_{ij})} \sigm(\gamma_{ij})(1-\sigm(\gamma_{ij})) \nonumber\\
    && - \frac{(1-B_{ij})}{(1-\sigm(\gamma_{ij}))}\sigm(\gamma_{ij})(1-\sigm(\gamma_{ij})))\sum_t \regret_{it} \bigg] \nonumber \\
    &=& E_{B_i}\bigg[ (\sigm(\gamma_{ij})-B_{ij}) \sum_t \regret_{it}\bigg]
\end{eqnarray} 
which gives us the desired answer (Equation~\eqref{eq:g-estimator}), and
where we note that the gradients on the structural parameters for different modules decouple, in the sense that there is no interference between the sampling of $B_{ij}$ and that of $B_{kl}$ when $i\neq k$.
\end{proof}

\fi

\section{Results on Learning which is Cause and which is Effect}
\label{sec:bivariate-experiment}
In order to assess the performance of our meta-learning algorithm, we applied it on generated data from three different domains: discrete random variables, multimodal continuous random variables and multivariate gaussian-distributed variables. In this section, we describe the setups for all three experiments, along with additional results to complement the results described in the main text. Note that in all these experiments, we fix the structure of the ground-truth to be $A \rightarrow B$, and only perform interventions on the cause $A$.

\subsection{Discrete variables and Two Causal Hypotheses}
\label{sec:bivariate-experiment-discrete}
We consider a bivariate model, where both random variables are sampled from a categorical distribution. The underlying ground-truth model can be described as
\begin{align}
    A &\sim \Categorical(\pi_{A})\nonumber\\
    B \mid A = a &\sim \Categorical(\pi_{B|a}),
\end{align}
with $\pi_{A}$ is a probability vector of size $N$, and $\pi_{B|a}$ is a probability vector of size $N$, which depends on the value of the variable $A$. In our experiment, each random variable can take one of $N = 10$ values. Since we are working with only two variables, the only two possible models are:
\begin{itemize}
    \item \emph{Model $A \rightarrow B$}: $P(A, B) = P(A)P(B\mid A)$
    \item \emph{Model $B \rightarrow A$}: $P(A, B) = P(B)P(A\mid B)$
\end{itemize}
We build 4 different modules, corresponding to the model of each possible marginal and conditional distribution. These modules' definition and their corresponding parameters are shown in Table~\ref{tab:categorical-modules}.

\begin{table*}[ht]
    \centering
    \begin{tabular}{ll|l|c|c}
        \hline
        & Distribution & Module & Parameters & Dimension\\
        \hline
        \multirow{2}{*}{\emph{Model $A \rightarrow B$}} & $P(A)$ & $\displaystyle P(x_{A} = i\,;\,\theta_{A}) = [\softmax(\theta_{A})]_{i}$ & $\theta_{A}$ & $N$\\
        &$P(B\mid A)$ & $\displaystyle P(x_{B} = j\mid x_{A} = i\,;\,\theta_{B|A}) = [\softmax(\theta_{B|A}(i))]_{j}$ & $\theta_{B|A}$ & $N^{2}$\\
        \hline
        \multirow{2}{*}{\emph{Model $B \rightarrow A$}} &$P(B)$ & $\displaystyle P(x_{B} = j\,;\,\theta_{B}) = [\softmax(\theta_B)]_{j}$ & $\theta_{B}$ & $N$\\
        &$P(A\mid B)$ & $\displaystyle P(x_{A} = i\mid x_{B} = j\,;\,\theta_{A|B}) = [\softmax(\theta_{A\mid B}(j))]_{i}$ & $\theta_{A|B}$ & $N^{2}$\\
        \hline
    \end{tabular}
    \caption{Description of the 2 models, with the parametrization of each module, for a bivariate model with discrete random variables. \emph{Model $A\rightarrow B$} and \emph{Model $B \rightarrow A$} both have the same number of parameters $N^{2} + N$.}
    \label{tab:categorical-modules}
\end{table*}

In order to get a set of initial parameters, we first train all 4 modules on a training distribution. This training distribution corresponds to a fixed choice of $\pi_{A}^{(1)}$ and $\pi_{B|a}$ (for all $N$ possible values of $a$). Note that the superscript in $\pi_{A}^{(1)}$ emphasizes the fact that this defines the distribution prior to intervention, with the mechanism $P(B\mid A)$ being unchanged by the intervention. These probability vectors are sampled randomly from a uniform Dirichlet distribution
\begin{align}
    \pi_{A}^{(1)} &\sim \Dirichlet(\mathbf{1}_{N})\nonumber\\
    \pi_{B|a} &\sim \Dirichlet(\mathbf{1}_{N})\qquad \forall a \in [1, N].
\end{align}

Given this initial training distribution, we can sample a large dataset of training examples $\{(a_{i}, b_{i})\}_{i=1}^{n}$ from the ground-truth model, using ancestral sampling.
\begin{align}
    a&\sim \Categorical(\pi_{A}^{(1)})\nonumber\\
    b&\sim \Categorical(\pi_{B|a}).
    \label{eq:app-ancestral-sampling}
\end{align}

Using this large dataset from the training distribution, we can train all 4 modules using gradient descent, or any other advanced first-order optimizer, like RMSprop. The parameters $\theta_{A}$, $\theta_{B|A}$, $\theta_{B}$ \& $\theta_{A|B}$ of the different modules found after this initial training will be used as the initial parameters for the adaptation on a new transfer distribution.

Similar to the way we defined the training distribution, we can define a transfer distribution as a soft intervention on the random variable $A$. In this experiment, this accounts for changing the distribution of $A$, that is with a new probability vector $\pi_{A}^{(2)}$, also sampled randomly from a uniform Dirichlet distribution
\begin{align}
    \pi_{A}^{(2)}&\sim \Dirichlet(\mathbf{1}_{N})
\end{align}
To perform adaptation on the transfer distribution, we also sample a smaller dataset of \emph{transfer} examples $D_{2} = \{(a_{i}, b_{i}\}_{i=1}^{m}$, with $m \ll n$ the size of the training set. In our experiment, we used $m=20$ transfer examples. We also used ancestral sampling on this new transfer distribution to acquire samples, similar to Equation~\eqref{eq:app-ancestral-sampling} (with $\pi_{A}^{(2)}$ instead of $\pi_{A}^{(1)}$).

Starting from the parameters estimated after the initial training on the training distribution, we perform a few steps of adaptation on the modules parameters $\theta_{A}$, $\theta_{B|A}$, $\theta_{B}$ \& $\theta_{A|B}$ using $T$ steps of gradient descent based on the transfer dataset $D_{2}$. The value of the likelihoods for both models is recorded as well, and computed as
\begin{align}
    \mathcal{L}_{A \rightarrow B} &= \prod_{t=1}^{T}P(\mathbf{a}_{t}\mid \theta_{A}^{(t)})P(\mathbf{b}_{t}\mid \mathbf{a}_{t}\,;\,\theta_{B|A}^{(t)})\nonumber\\
    \mathcal{L}_{B \rightarrow A} &= \prod_{t=1}^{T}P(\mathbf{b}_{t}\mid \theta_{B}^{(t)})P(\mathbf{a}_{t}\mid \mathbf{b}_{t}\,;\,\theta_{A|B}^{(t)}),
\end{align}
where $(\mathbf{a}_{t}, \mathbf{b}_{t})$ represents a mini-batch of examples from $D_{2}$, and the superscript $t$ on the parameters highlights the fact that these likelihoods are computed after $t$ steps of adaptation. This product over $t$ ensures that we monitor the progress of adaptation along the whole trajectory. In this experiment, we used $T=2$ steps of gradient descent on mini-batch of size 10 for the adaptation.

Finally, in order to update the structural parameter $\gamma$, we can use Proposition~\ref{prop:posterior} to compute the gradient of the loss $L$ with respect to $\gamma$:
\begin{align}
    \mathcal{R}(\gamma) &= -\log [\sigma(\gamma) \mathcal{L}_{A \rightarrow B} + (1 - \sigma(\gamma)) \mathcal{L}_{B \rightarrow A}]\\
    \frac{\partial \mathcal{R}}{\partial \gamma} &= \sigma(\gamma + \Delta) - \sigma(\gamma),
\end{align}
where $\Delta = \log \mathcal{L}_{A\rightarrow B} - \log \mathcal{L}_{B\rightarrow A}$. The update of $\gamma$ can be one step of gradient descent, or using any first-order optimizer like RMSprop. We perform multiple interventions over the course of meta-training by sampling multiple transfer distributions, and following the same steps of adaptation and update of the structural parameter $\gamma$.

In Figure~\ref{fig:two-variables-alphas-discrete}, we report the evolution of the structural parameter $\gamma$ (or rather, $\sigma(\gamma)$) as a function of the number of meta-training steps or, similarly, the number of different interventions made on the causal model. The model's belief $P(A\rightarrow B) = \sigma(\gamma)$ indeed converges to $1$, proving that the algorithm was capable of recovering the correct causal direction $A \rightarrow B$. 

\subsection{Discrete Variables with MLP Parametrization}
\label{sec:MLP2}

We consider a bivariate model similar to the ones defined above, where each random variable is sampled from a categorical distribution. Instead of expressing probabilities in a tabular form, we train $M=2$ simple feed-foward neural networks (MLP), one per conditional variable. MLP $i$ is the independent mechanism of causal variable $i$ that determines the conditional probability of the $N$ discrete choices for variable $i$, given its parents.

Each MLP receives $M$ concatenated $N$-dimensional one-hot vectors, masked appropriately according to the chosen causal structure $B$, i.e., with the $j$-th input
of the $i$-th MLP being multiplied by $B_{ij}$. Each directed edge presence or absence is thus indicated by $B_{ij}$, with $B_{ij}=1$ if variable $j$ is a direct causal parent of variable $i$. The MLP maps the $MN$ input units through one hidden layer that contains $H=4M$ hidden units and a ReLU non-linearity, and then maps the $H$ hidden units to $N$ output units and a softmax representing a predicted categorical distribution.

The causal structure belief is specified by an $M\times M$ matrix $\gamma$, with $\sigma(\gamma_{ij})$ the estimated probability that variable $i$ is directly caused by variable $j$. The causal structure $B_{ij}$ is drawn from $\mathrm{Ber}(\gamma_{ij})$, as per Algorithm~\ref{alg:main}. We generalize the estimator introduced for the 2-hypotheses case as per Appendix~\ref{sec:many-factors}, i.e., we use the gradient estimator in Equation~\ref{eq:gij}.

To evaluate the correctness of the structure being learnt, we measure the cross entropy between the ground-truth SCM and the learned SCM. In Figure \ref{fig:mlp_scm} we show this cross-entropy over different episodes of training for bivariate discrete distributions with either 10 categories or 100 categories. Both models are first pretrained for 100 examples with fully connected edges before starting training on the transfer distributions.

\subsection{Continuous Multimodal Variables}
\label{sec:continuous}

Consider a family of joint distributions $P_{\mu}(A, B)$ over the causal variables $A$ and $B$ sampled from the structural causal model (SCM): 
\begin{align} \label{eq:cont_scm}
    A &\sim P_{\mu}(A) = \mathcal{N}(\mu, \sigma^{2} = 4) \nonumber \\
    B &:= f(A) + N_{B}\qquad N_{B} \sim \mathcal{N}(\mu = 0, \sigma^{2} = 1)
\end{align}
where $f$ is a randomly generated spline and $N_{B}$ is sampled i.i.d from the unit-normal distribution.

To obtain the spline, we sample the $K$ points $\{x_{k}\}_{k = 1}^{K}$ uniformly spaced from the interval $[-R_A, R_A]$, and another $K$ points $\{y_{k}\}_{k = 1}^{K}$ uniform randomly from the interval $[-R_B, R_B]$. This yields $K$ pairs $\{(x_{k}, y_{k})\}_{k = 1}^{K}$, which make the knots of a second-order spline. We set $K = 8$, $R_A = R_B = 8$ for our experiments. In Figure~\ref{fig:encoder_scm}, we plot samples from one such SCM for the training distribution ($\mu = 0$) and two transfer distributions ($\mu = \pm 4$). 

\begin{figure}[ht]
    \centering
    \ificml
    \includegraphics[width=\linewidth]{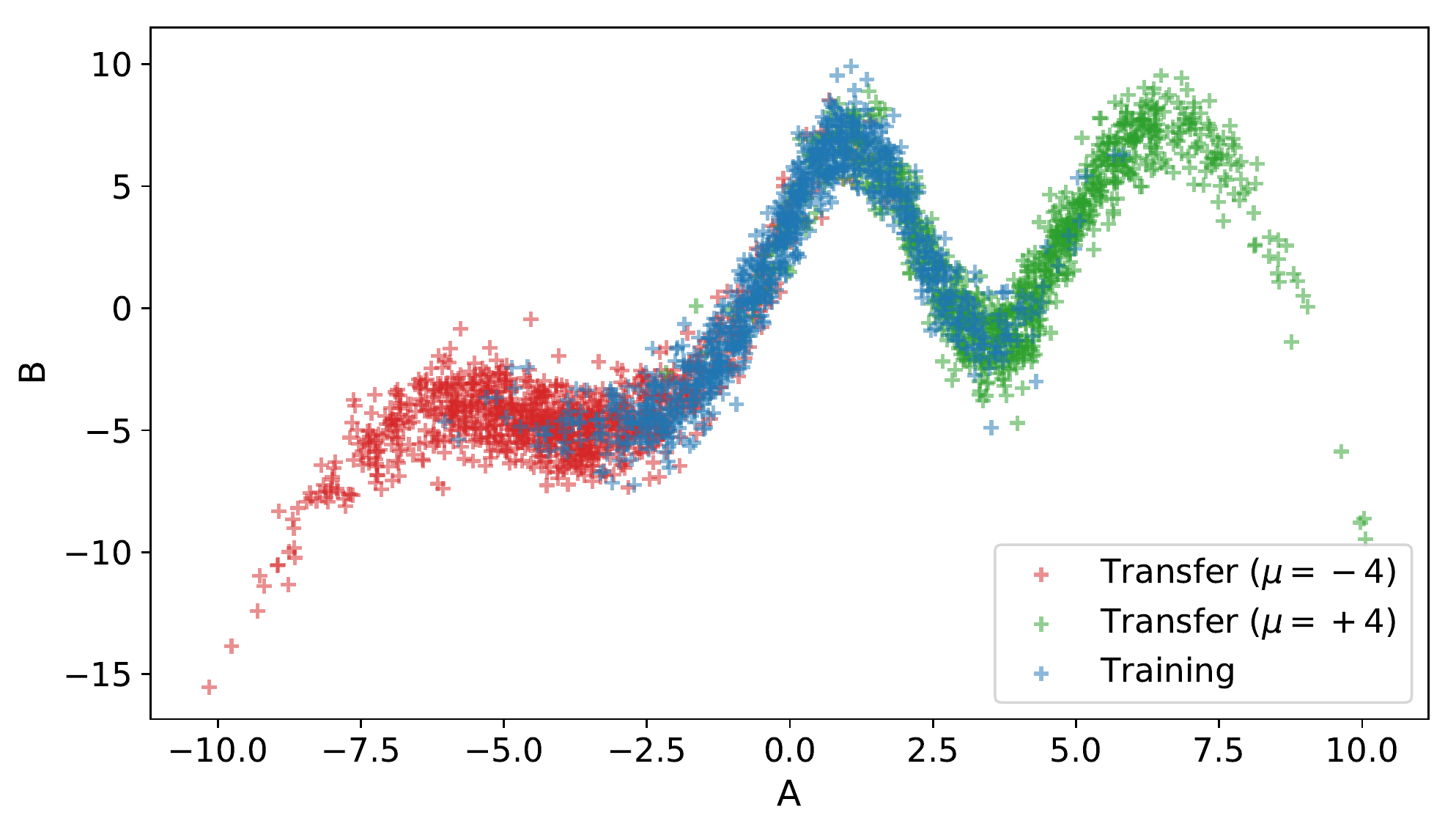}
    \else
    \includegraphics[width=0.6\linewidth]{scm-data-generation.pdf}
    \fi
    \caption{Train (red) and transfer (green and blue) samples from an SCM generated with the procedure described in Equation~\eqref{eq:cont_scm}. The green data-points are sampled from $P_{\mu=(-4)}(A, B)$, whereas the blue data-points are samples from $P_{\mu=(+4)}(A, B)$ and the red data points (training set) are from $P_{\mu=0}(A, B)$.}
    \label{fig:encoder_scm}
\end{figure}

The conditionals $P(B \mid A\,;\, \theta_{B|A})$ and $P(A \mid B\,;\, \theta_{A|B})$ are parameterized as 2-layer Mixture Density Networks \cite{bishop1994mixture} with $32$ hidden units and $10$ components. The marginals $P(A \mid \theta_{A})$ and $P(B \mid \theta_{B})$ are parameterized as Gaussian Mixture Models, also with $10$ components. The training now follows as described below. 

Similar to Appendix~\ref{sec:bivariate-experiment-discrete}, we first pre-train the modules corresponding to the conditionals and marginals on the training distribution. To that end, we select $P_{\mu=0}(A, B)$ as the training distribution, sample a (large) training dataset $\{(a_{i}, b_{i})\}_{i=1}^{n}$ from it using ancestral sampling, and solve the following two problems independently until convergence: 
\begin{align}
\max_{\theta_{A}, \theta_{B|A}} &\sum_{i=1}^{n}\log P(a_{i} \mid \theta_{A})P(b_{i} \mid a_{i}; \theta_{B|A}) \\
\max_{\theta_{B}, \theta_{A|B}} &\sum_{i=1}^{n}\log P(b_{i} \mid \theta_{B})P(a_{i} \mid b_{i}; \theta_{A|B})
\end{align}

The adaptation performance of $A \to B$ and $B \to A$ models can now be evaluated on transfer distributions. For a $\mu$ sampled uniformly in $[-4, 4]$, we select $P_{\mu}(A', B')$ as the transfer distribution, and denote with $(A', B')$ samples from it. Both models are fine-tuned on $P_{\mu}(A', B')$ for $T = 10$ iterations (see Algorithm~\ref{alg:main}), and the area under the corresponding negative-log-likelihood curves becomes the regret: 
\begin{align} \label{eq:cont_regret_A2B}
\mathcal{R}_{A \to B} = -\sum_{t = 1}^{T} \log P(B' | A'; \theta_{A \to B}^{(t)}) P(A' | \theta_{A \to B}^{(T)})
\end{align}
and likewise for $\mathcal{R}_{B \to A}$. In these experiments, the modules corresponding to the marginals (ie. GMM) are learned \emph{offline} via Expectation Maximization, and we denote with $P(A' | \theta_{A \to B}^{(T)})$ the trained model. These can now be used to define the following meta-objective for the structural meta-parameter $\gamma$: 
\begin{align} \label{eq:gamma_regret}
\mathcal{R}(\gamma) = \log[\sigma(\gamma) e^{\mathcal{R}_{A \to B}} + (1 - \sigma(\gamma)) e^{\mathcal{R}_{B \to A}}]
\end{align}

The structural regret $\mathcal{R}(\gamma)$ is now minimized with respect to $\gamma$ for 200 iterations (updates of $\gamma$). 
\ifplanB
\else
In the notation of Algorithm~\ref{alg:main}, these are the iterations over $J$. 
\fi
Figure~\ref{fig:encoder_gamma_evo} shows the evolution of $\sigma(\gamma)$ as training progresses. This is expected, given that we expect the causal model to perform better on the transfer distributions, i.e. we expect $\mathcal{R}_{A \to B} < \mathcal{R}_{B \to A}$ in expection. Consequently, assigning a larger weight to $\mathcal{R}_{A \to B}$ optimizes the objective.

\begin{figure}[ht]
    \centering
    \ificml
    \includegraphics[width=\linewidth]{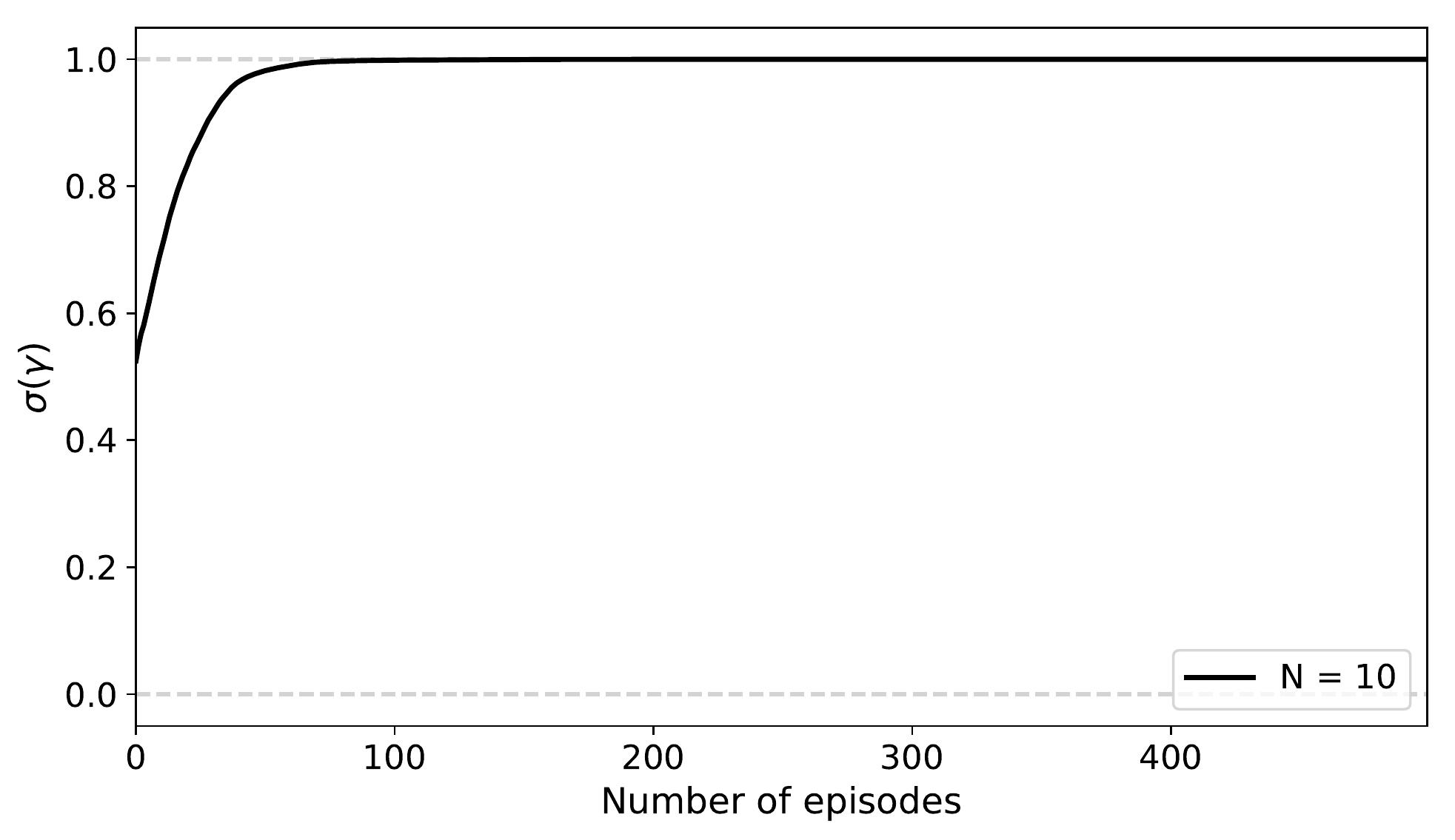}
    \else
    \includegraphics[width=0.6\linewidth]{alphas-metatrain-continuous-multimodal.pdf}
    \fi
    \caption{Evolution of the sigmoid of structural meta-parameter $\sigma(\gamma)$ with training iterations. It is indeed expected to increase if $A \to B$ is the true causal graph (see Equation~\eqref{eq:gamma_regret}).}
    \label{fig:encoder_gamma_evo}
\end{figure}

\subsection{Linear Gaussian Model}
\label{sec:linear-gaussian}
In this experiment, the two variables we consider are vectors (i.e. $A \in \mathbb{R}^d$ and
$B \in \mathbb{R}^d$). The ground truth causal model is given by
\begin{align} \label{eq:lingauss_scm}
    A &\sim \mathcal{N}(\mu_A, \Sigma_A) \nonumber \\
    B &:= \beta_1 A + \beta_0 + N_{B}\qquad N_{B} \sim \mathcal{N}(0, \Sigma_{B})
\end{align}
where $\mu_A \in \mathbb{R}^{d}$, $\beta_0 \in \mathbb{R}^{d}$ and $\beta_1 \in \mathbb{R}^{d \times d}$. $\Sigma_A$ and $\Sigma_B$ are $d \times d$ covariance matrices\footnote{Ground truth parameters $\mu_A$, $\beta_1$ and $\beta_0$ are sampled from a Gaussian distribution, while $\Sigma_A$ and $\Sigma_B$ are sampled from an inverse Wishart distribution.}. In our experiments, $d=100$. Once again, we want to identify the correct causal direction between $A$ and $B$. To do so, we consider two models: $A \rightarrow B$ and $B \rightarrow A$. We parameterize both models symmetrically:

\begin{align} \label{eq:lingauss_models}
    P_{A \rightarrow B}(A) &= \mathcal{N}(A; \hat{\mu}_A, \hat{\Sigma}_A) \nonumber \\
    P_{A \rightarrow B}(B \mid A=a) &= \mathcal{N}(B; \hat{W}_1 a + \hat{W}_0, \hat{\Sigma}_{A \rightarrow B}) \nonumber \\
    P_{B \rightarrow A}(B) &= \mathcal{N}(B; \hat{\mu}_B, \hat{\Sigma}_B) \nonumber \\
    P_{B \rightarrow A}(A \mid B=b) &= \mathcal{N}(B; \hat{V}_1 b + \hat{V}_0, \hat{\Sigma}_{B \rightarrow A})
\end{align}
Note that each covariance matrix is parameterized using the Cholesky decomposition. Unlike previous experiments, we are not conducting any pre-training on actual data. Instead, we fix the parameters of both models to their exact values according to the ground truth parameters introduced in Equation \ref{eq:lingauss_scm}. For model $A \rightarrow B$, this can be done trivially. For the second model, we can compute its exact parameters analytically. Once the exact parameters are set, both models are equivalent in the sense that $P_{A \rightarrow B}(A, B) = P_{B \rightarrow A}(A, B)$ $\forall A, B$.

Each meta-learning episode starts by initializing the parameters of both models to the values identified during the pre-training. Afterward, a transfer distribution is sampled (i.e. $\mu_A \sim \mathcal{N}(0, I)$). Then, both models are trained on samples from this distribution, for 10 iterations only. During this adaptation, the log-likelihoods of both models are accumulated in order to compute $\mathcal{L}_{A \rightarrow B}$ and $\mathcal{L}_{B \rightarrow A}$. At this stage, we compute the meta objective estimate $\regret=-\log \left[ \sigm(\gamma) \mathcal{L}_{A \rightarrow B} + (1-\sigm(\gamma)) \mathcal{L}_{B \rightarrow A} \right]$, compute its gradient w.r.t. $\gamma$ and update $\gamma$.

Figure \ref{fig:LinGauss_gamma_evo} shows that, after 200 episodes, $\sigma(\gamma)$ converges to 1, indicating the success of the method on this particular task. 

\begin{figure}[ht]
    \centering
    \ificml
    \includegraphics[width=\linewidth]{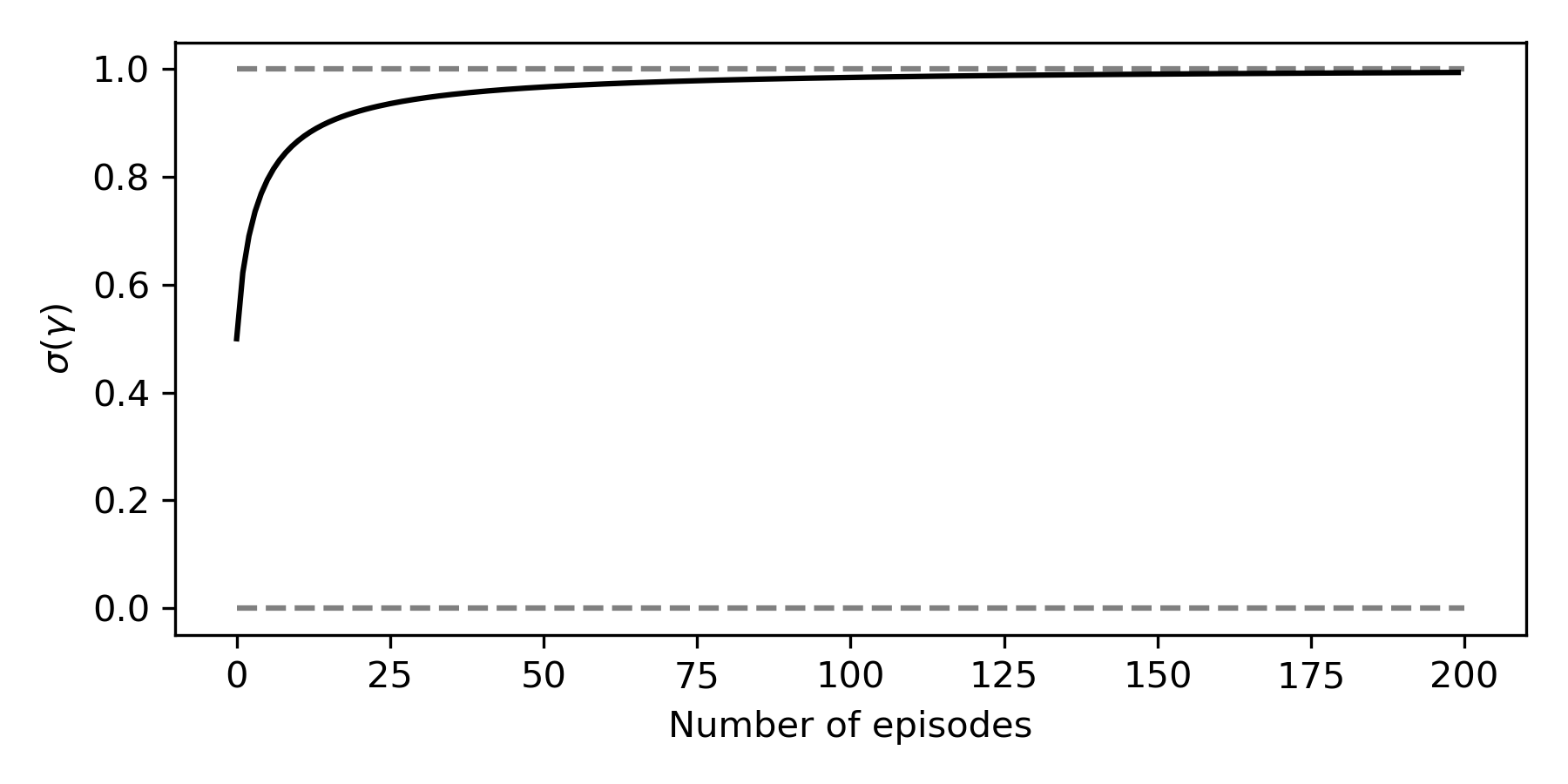}
    \else
    \includegraphics[width=0.6\linewidth]{sigma_of_gamma_linear_gaussian.png}
    \fi
   \caption{Convergence of the causal belief (to the correct answer) as a function of the number of meta-learning episodes, for the linear Gaussian experiments.}
    \label{fig:LinGauss_gamma_evo}
\end{figure}

\section{Results on Learning the Correct Encoder}
\label{sec:encoder-experiment}

The causal variables $(A, B)$ are sampled from the distribution described in Eqn~\ref{eq:cont_scm}, and are mapped to observations $(X, Y) \sim P_{\mu}(X, Y)$ via a hidden (and a priori unknown) decoder $\mathcal{D} = R(\theta_{\mathcal{D}})$, where $R$ is a rotation matrix. The observations are then mapped to the hidden state \mbox{$(U, V) \sim P_{\mu}(U, V)$} via the encoder $\mathcal{E} = R(\theta_{\mathcal{E}})$. The computational graph is depicted in Figure~\ref{fig:encoder_arch}. 

Analogous to Equation~\ref{eq:gamma_regret} in Appendix~\ref{sec:continuous}, we now define the regret over the variables $(U, V)$ instead of $(A, B)$:
\begin{align}
\mathcal{R}(\gamma, \theta_{\mathcal E}) = \log[\sigma(\gamma) e^{\mathcal{R}_{U \to V}} + (1 - \sigma(\gamma)) e^{\mathcal{R}_{V \to U}}]
\end{align}
where the dependence on $\theta_{\mathcal{E}}$ is implicit in $(U, V)$. In every meta-training iteration, the $U \to V$ and $V \to U$ models are trained on the training distribution $P_{\mu=0}(U, V)$ for $T' = 20$ iterations. Subsequently, the regrets $\mathcal{R}_{U \to V}$ and $\mathcal{R}_{V \to U}$ are obtained by a process identical to that described in Equation~\ref{eq:cont_regret_A2B} of Appendix~\ref{sec:continuous} (albeit with variables $(U, V)$ and $T = 5$). Finally, the gradients of $\mathcal R(\gamma, \theta_{\mathcal{E}})$ are evaluated and the meta-parameters $\gamma$ and $\theta_{\mathcal{E}}$ are updated. This process is repeated for $1000$ meta-iterations, and Figure~\ref{fig:encoder_evo} shows the evolution of $\theta_{\mathcal{E}}$ as training progresses (where $\theta_{\mathcal{D}}$ has been set to $-\frac{\pi}{4}$). Further, Figure~\ref{fig:enc_gamma_evo} shows the corresponding evolution of the structural parameter $\gamma$. 

\begin{figure}[ht]
    \centering
    \ificml
    \includegraphics[width=\linewidth]{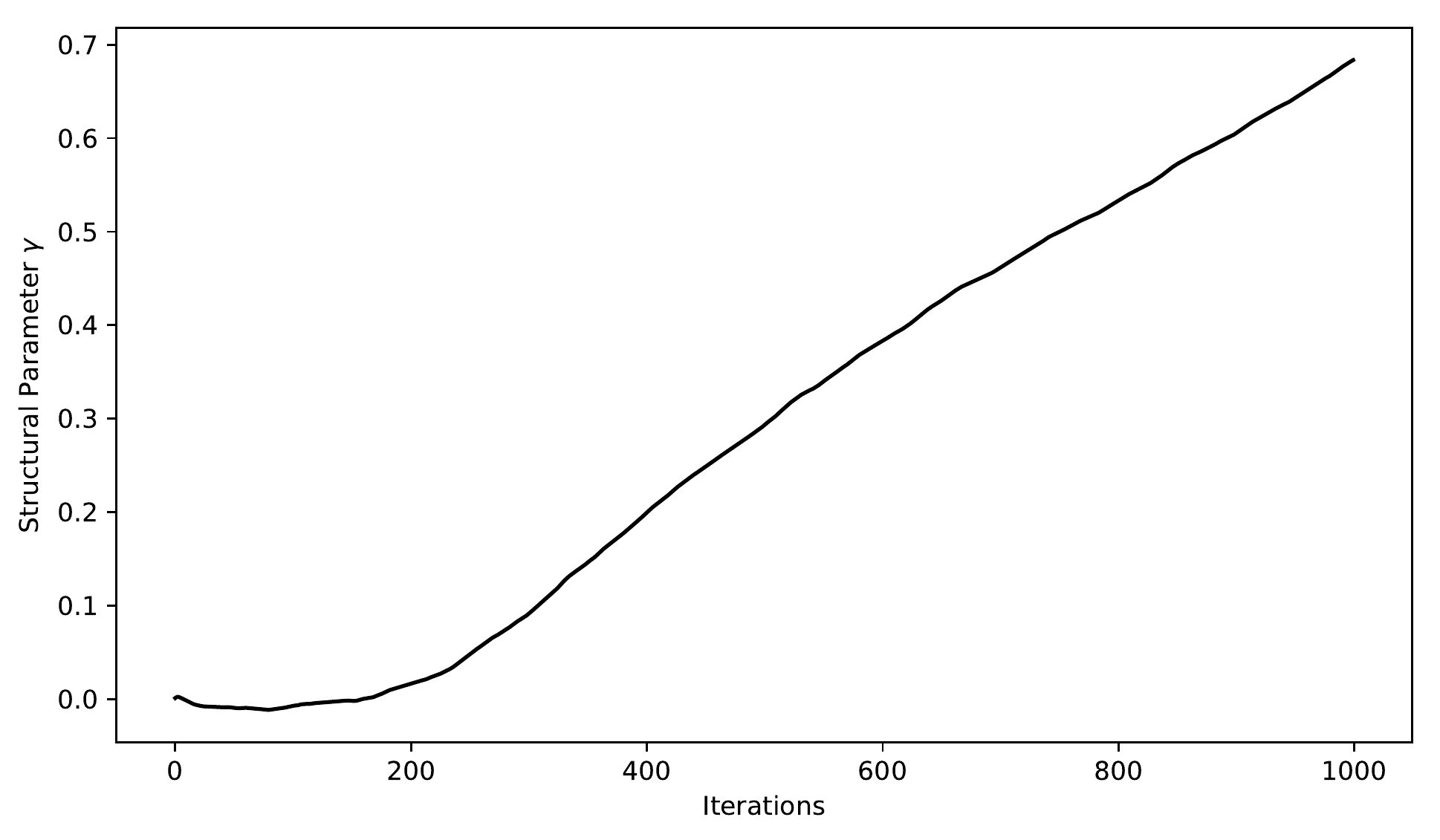}
    \else
    \includegraphics[width=0.6\linewidth]{gamma-evo.pdf}
    \fi
    \caption{Evolution of the structural parameter $\gamma$ as training progresses with the encoder. The corresponding evolution of the encoder parameter $\theta_{\mathcal{E}}$ is shown in Figure~\ref{fig:encoder_evo}. Observe that the system converges to $\theta = 0$, implying that the correct causal direction is $U \to V$ and the parameter $\gamma$ should increase with meta-training iterations.}
    \label{fig:enc_gamma_evo}
\end{figure}

\end{document}